\title{Bootstrapping Skills}
\author{
Daniel J. Mankowitz \\
Technion - Israel Institute of Technology\\
Electrical Engineering Department, The Technion - Israel Institute of Technology, Haifa 32000, Israel \\
\texttt{danielm@tx.technion.ac.il} \\
\And
Timothy A. Mann \\
Google Deepmind \\
London, UK \\
\texttt{timothymann@google.com} \\
\AND
Shie Mannor \\
Technion - Israel Institute of Technology\\
Electrical Engineering Department, The Technion - Israel Institute of Technology, Haifa 32000, Israel \\
\texttt{shie@ee.technion.ac.il} \\
}
\newtheorem{define}{Definition}
\newtheorem{lemma}{Lemma}
\newtheorem{theorem}{Theorem}
\newcommand{\Algorithm}{Learning Skills via Bootstrapping}
\newcommand{\Alg}{LSB}
\newcommand{\TEA}{TEA}
\newcommand{\TEAs}{\TEA s}
\begin{document}

\maketitle

\begin{abstract}
The monolithic approach to policy representation in Markov Decision Processes (MDPs) looks for a single policy that can be represented as a function from states to actions. For the monolithic approach to succeed (and this is not always possible), a complex feature representation is often necessary since the policy is a complex object that has to prescribe what actions to take all over the state space. This is especially true in large domains with complicated dynamics. It is also computationally inefficient to both learn and plan in MDPs using a complex monolithic approach. We present a different approach where we restrict the policy space to policies that can be represented as combinations of simpler, parameterized skills---a type of temporally extended action, with a simple policy representation. 
We introduce Learning Skills via Bootstrapping (LSB) that can use a broad family of Reinforcement
Learning (RL) algorithms as a ``black box'' to iteratively learn parametrized skills. Initially, the
learned skills are short-sighted but each iteration of the algorithm allows the skills to bootstrap
off one another, improving each skill in the process. We prove that this bootstrapping process returns a near-optimal policy. Furthermore, our experiments demonstrate that LSB can solve MDPs that, given the same representational power, could not be solved by a monolithic approach. Thus, planning with learned skills results in better policies without requiring complex policy representations.
\end{abstract}

%
%


\section{Introduction}

State-of-the-art Reinforcement Learning (RL) algorithms need to produce compact solutions to large or continuous state Markov Decision Processes (MDPs), where a solution, called a policy, generates an action when presented with the current state. One such approach to producing compact solutions is linear function approximation.

MDPs are important for both planning and learning in \textit{Reinforcement Learning (RL)}. The RL planning problem uses an MDP model to derive a policy that maximizes the sum of rewards received, while the RL learning problem learns an MDP model from experience (because the MDP model is unknown in advance). In this paper, we focus on RL planning, and use insights from RL that could be used to scale up to problems that are unsolvable with traditional planning approaches (such as Value Iteration and Policy Iteration (c.f., \cite{Puterman1994}). A general result from machine learning is that the sample complexity of learning increases with the complexity of the representation \cite{Vapnik1998}. In a planning scenario, increased sample complexity directly translates to an increase in computational complexity. Thus monolithic approaches, which learn a single parametric policy that solves the entire MDP,  scale poorly. This is because they often require highly complex feature representations, especially in high-dimensional domains with complicated dynamics, to support near-optimal policies. Instead, we investigate learning a collection of policies over a much simpler feature representation (compact policies) and combine those policies hierarchically.

{\em Generalization}: the ability of a system to perform accurately on unseen data, is important for machine learning in general, and can be achieved in this context by restricting the policy space, resulting in compact policies \cite{Bertsekas1995,Sutton1996}.
Policy Search (PS) algorithms, a form of generalization, learn and maintain a compact policy representation so that the policy generates similar actions in nearby states \cite{Peters2008,Bhatnagar2009}.  

{\em Temporally Extended Actions} \cite[\TEAs,][]{Sutton1999}: Compact policies can be represented and combined hierarchically as \TEAs. \TEAs\ are control structures that execute for multiple timesteps. They have been extensively studied under different names, including skills \cite{Konidaris2009}, macro-actions \cite{Hauskrecht1998,He2011}, and options \cite{Sutton1999}. \TEAs\ are known to speed up the convergence rate of MDP planning algorithms \cite{Sutton1999,Mann2014a}. However, the effectiveness of planning with \TEAs\ depends critically on the given actions. For example, Figure \ref{fig:skill_sets}$a$ depicts an episodic MDP with a single goal region and skills $\{ \sigma_1, \sigma_2, \dots , \sigma_5\}$. In this 2D setting, each skill represents a simple movement in a single, linear direction.  Most of the \TEAs\ move towards the goal region, but $\sigma_5$ moves in the opposite direction of the goal making it impossible to reach. With these \TEAs, we cannot hope to derive a satisfactory solution. On the other hand, if one of the \TEAs\ takes the agent directly to the goal (Figure \ref{fig:skill_sets}$b$, the monolithic approach), then planning becomes trivial. Notice, however, that this \TEA\ may be quite complex, and therefore difficult to learn since, in this 2D setting, it represents non-linear movements in multiple directions.

\noindent\begin{minipage}{.5\textwidth}
\centering
\includegraphics[width=0.4\textwidth]{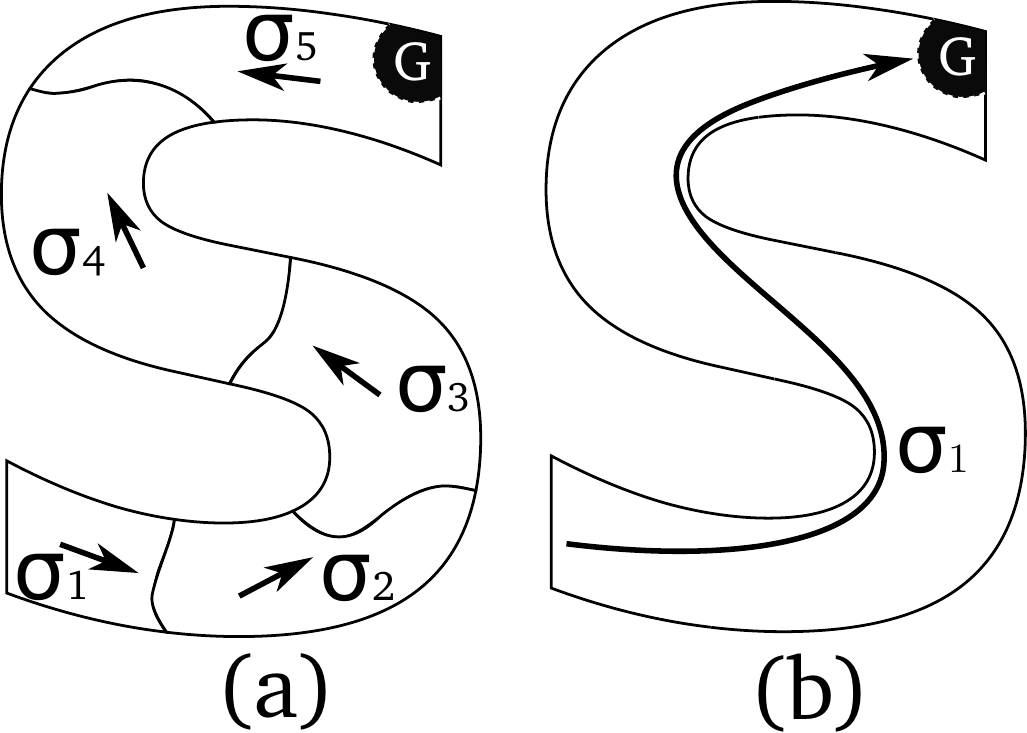} 
\captionof{figure}{\TEAs\ in an episodic MDP with S-shaped state-space and goal region $G$. ($a$) Although most actions move toward the goal, $\sigma_5$ moves away from the goal making it impossible to complete the task. ($b$) Planning becomes trivial when a single \textit{TEA} takes the agent directly to $G$.}
\label{fig:skill_sets}
\end{minipage}%
\hspace{0.2cm}
\begin{minipage}[t]{.5\textwidth}
\centering
\includegraphics[width=0.8\textwidth]{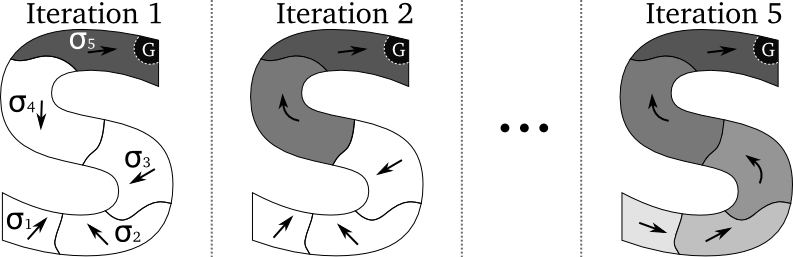}
\captionof{figure}{A partitioning of a target MDP in the pinball domain. Each sub-partition (partition class) $i$ represents the skill MDP $M'_i$.}
\label{fig:bs}
\end{minipage}


Learning a useful set of \TEAs\ has been a topic of intense research \cite{McGovern2001,Moerman2009,Konidaris2009,Brunskill2014, Hauskrecht1998}. However, prior work suffers from the following drawbacks: (1) lack of theoretical analysis guaranteeing that the derived policy will be near-optimal in continuous state MDPs, (2) the process of learning \TEAs\ is so expensive that it needs to be ammortized over a sequence of MDPs, (3) the approach is not applicable to MDPs with large or continuous state-spaces, or (4) the learned \TEAs\ do not generalize over the state-space. We provide the first theoretical guarantees for iteratively learning a set of simple, generalizable parametric \TEAs\ (skills) in a continuous state MDP. The learned \TEAs\ solve the given tasks in a near-optimal manner.



{\bf Skills: Generalization \& Temporal Abstraction:} Skills are \TEAs\ defined over a parametrized policy. Thus, they incorporate both temporal abstraction and generalization. As \TEAs, skills are closely related to options \cite{Sutton1999} developed in the RL literature. In fact, skills, as defined here, are a special case of options. Therefore, skills inherit many of the useful theoretical properties of options (e.g., \cite{Precup1998}). The main difference between skills and more general options is that skills are based on parametric policies that can be initialized and reused in any region of the state space.

%
%

We introduce a novel meta-algorithm, \Algorithm\ (\Alg), that uses an RL algorithm as a ``black box'' to iteratively learn parametrized skills. The learning algorithm is given a partition of the state-space, and one skill is created for each class in the partition. This is a very weak requirement since any partition could be used, such as a grid. During an iteration, an RL algorithm is used to update each skill. The skills may be initialized arbitrarily, but after the first iteration skills with access to a goal region or non-zero rewards will learn how to exploit those rewards (e.g., Figure \ref{fig:bs}, Iteration 1). On further iterations, the newly acquired skills propagate reward back to other regions of the state-space. Thus, skills that previously had no reward signal bootstrap off of the rewards of other skills (e.g., Figure \ref{fig:bs}, Iterations 2 and 5). Although each skill is only learned over a single partition class, it can be initialized in any state.

It is important to note that this paper deals primarily with learning {\em \TEAs} or \textit{Skills} that aid in both speeding up the convergence rate of RL planning algorithms \cite{Sutton1999,Mann2014a}, as well as enabling larger problems to be solved using skills with simple policy representations. Utilizing simple policy representations is advantageous since this results in better generalization and better sample efficiency. These \textit{skills} represent a misspecified model of the problem since they are not known in advance. By learning skills, we are also therefore inherently tackling the learning problem as we are iteratively correcting a misspecified model.

{\bf Contributions:} Our main contributions are \textbf{(1)} The introduction of \Algorithm\ (\Alg), which requires no additional prior knowledge apart from a partition over the state-space. \textbf{(2)} LSB is the first algorithm for learning skills in continuous state-spaces with theoretical convergence guarantees. \textbf{(3)} Theorem \ref{thm:lsb}, which relates the quality of the policy returned by \Alg\ to the quality of the skills learned by the ``black box'' RL algorithm. \textbf{(4)} Experiments demonstrating that \Alg\ can solve MDPs that, given the same representational power, can not be solved by a policy derived from a monolithic approach. Thus, planning with learned skills allows us to work with simpler representations \cite{Barto2013}, which ultimately allows us to solve larger MDPs.


\section{Background}
Let $M = \langle S, A, P, R, \gamma \rangle$ be an MDP, where $S$ is a (possibly infinite) set of states, $A$ is a finite set of actions, $P$ is a mapping from state-action pairs to probability distributions over next states, $R$ maps each state-action pair to a reward in $[0, 1]$, and $\gamma \in [0, 1)$ is the discount factor. While assuming the rewards are in $[0, 1]$ may seem restrictive, any bounded space can be rescaled so that this assumption holds. A policy $\pi(a|s)$ gives the probability of executing action $a \in A$ from state $s \in S$. 

Let $M$ be an MDP. The value function of a policy $\pi$ with respect to a state $s \in S$ is
$
V^{\pi}_{M}(s) = \mathbb{E} \left[ \sum_{t=1}^{\infty} \gamma^{t-1} R(s_t,a_t) | s_0 = s \right]
$ where the expectation is taken with respect to the trajectory produced by following policy $\pi$. The value function of a policy $\pi$ can also be written recursively as

\begin{equation} \label{eqn:value}
V^{\pi}_{M}(s) = \mathbb{E}_{a \sim \pi(\cdot|s)} \left[ R(s,a) \right] + \gamma \mathbb{E}_{s' \sim P(\cdot|s,\pi)} \left[ V^{\pi}(s') \right] \enspace ,
\end{equation}
which is known as the Bellman equation. The optimal Bellman equation can be written as
$
V^{*}_{M}(s) = \max_a \mathbb{E} \left[ R(s,a) \right] + \gamma \mathbb{E}_{s' \sim P(\cdot|s,\pi)} \left[ V^{*}(s') \right] \enspace .
$
Let $\varepsilon > 0$. We say that a policy $\pi$ is $\varepsilon$-optimal if $V^{\pi}_M(s) \geq V^{*}_M(s) - \varepsilon$ for all $s \in S$. The action-value function of a policy $\pi$ can be defined by 
$
Q^{\pi}_{M}(s,a) = \mathbb{E}_{a \sim \pi(\cdot|s)} \left[ R(s,a) \right] + \gamma \mathbb{E}_{s' \sim P(\cdot|s,\pi)} \left[ V^{\pi}(s') \right] \enspace ,
$
for a state $s \in S$ and an action $a \in A$, and the optimal action-value function is denoted by $Q^{*}_{M}(s, a)$. Throughout this paper, we will drop the dependence on $M$ when it is clear from context.

\section{Skills}

One of the key ideas behind skills is that they may be learned locally, but they can be used throughout the entire state-space. We present a new formal definition for skills and a skill policy.

\begin{define} \label{def:skill}
A {\bf skill} $\sigma$ is defined by a pair $\langle \pi_{\theta}, \beta \rangle$, where $\pi_{\theta}$ is a parametric policy with parameter vector $\theta$ and $\beta : S \rightarrow \{ 0, 1\}$ indicates whether the skill has finished (i.e., $\beta(s) = 1$) or not (i.e., $\beta(s) = 0$) given the current state $s \in S$.
\end{define}


\begin{define} \label{def:skill_policy}
Let $\Sigma$ be a set of $m \geq 1$ skills. A {\bf skill policy} $\mu$ is a mapping $\mu : S\rightarrow [m]$ where $S$ is the state-space and $[m]$ is the index set over skills.
\end{define}

A skill policy selects which skill to initialize from the current state by returning the index of one of the skills. By defining skill policies to select an index (rather than the skill itself), we can use the same policy even as the set of skills is adapting. Next we define a \textit{Skill MDP}, which is a sub-partition of a target MDP as shown in Figure \ref{fig:partitioning}.

\begin{define} \label{def:skill_mdp}
Given a target MDP $M = \langle S, A, P, R, \gamma \rangle$ and value function $V_{M}$, a {\bf Skill MDP} for partition $\mathcal{P}_i$ is an MDP defined by $M_i' = \langle S', A, P', R', \gamma \rangle$ where $S' = \mathcal{P}_i \cup \{ s_T \}$ where $s_T$ is a terminal state and $A$ is the action set from $M$. The transition probability function $P'(s'|s,a)$ and reward function $R'(s, a)$ are defined below.
\\$P'(s'|s,a) =$ \hspace{5.0cm} $R'(s, a) = $
$$
\left\{ \begin{array}{cl} P(s'|s,a) & \textrm{if } s \in \mathcal{P}_i \wedge s' \in \mathcal{P}_i \\ \sum_{y \in S \backslash \mathcal{P}_i} P(y|s,a) & \textrm{if } s \in \mathcal{P}_i \wedge s' = s_T \\ 1 & \textrm{if } s = s_T \wedge s' = s_T \\ 0 & \textrm{if } s = s_T \wedge s' \neq s_T  \end{array} \right. ,
\left\{ \begin{array}{cl} 
0 & \textrm{if } s = s_T \\ 

\sum\limits_{s' \in \mathcal{P}_i} \hspace{-0.5em} P(s'|s,a) R(s,a) & \textrm{if } s \neq s_T \wedge s' \neq s_T \\ 

\sum\limits_{y \in S \backslash \mathcal{P}_i} \hspace{-1.2em} \psi(s,a,y)  & \textrm{if } s \neq s_T \wedge s' = s_T \end{array} \right. \enspace ,
$$
where $\psi(s, a, y) = P(y|s,a)\left( R(s,a) + \gamma V_{M}(y) \right)$, and $\gamma$ is the discount factor from $M$.
\end{define}

A Skill MDP $M_i'$ is an episodic MDP that terminates once the agent escapes from $\mathcal{P}_i$ and upon terminating receives a reward equal to the value of the state the agent would have transitioned to in the target MDP. Therefore, we construct a modified MDP called a Skill MDP and apply a planning or RL algorithm to solve it. The resulting solution is a skill. Each Skill MDP $M_i'$ is defined within the partition $\mathcal{P}_i$. 

Given a good set of skills, planning can be significantly faster \cite{Sutton1999,Mann2014a}. However, in many domains we may not be given a good set of skills. Therefore it is necessary to learn this set of skills given the unsatisfactory skill set. In the next section, we introduce an algorithm for dynamically improving skills via bootstrapping.

%
%
\section{\Algorithm\ (\Alg) Algorithm}

\noindent\begin{minipage}{.6\textwidth}
\captionof{algorithm}{\textbf{\Algorithm\ (\Alg)}}\label{alg:slb}
\label{alg:slb}
\begin{algorithmic}[1]
\REQUIRE $M$ \COMMENT{Target MDP}, $\mathcal{P}$ \COMMENT{Partitioning of $S$},\\ $K$ \COMMENT{\# Iterations}
\STATE $m \leftarrow | \mathcal{P} |$ \COMMENT{\# of partitions.} \label{alg:slb:num_skills}
\STATE $\mu(s) = \arg \max_{i \in [m]} \mathbb{I} \{ s \in \mathcal{P}_i \}$ \label{alg:slb:skill_policy}
\STATE Initialize $\Sigma$ with $m$ skills. \COMMENT{1 skill per partition.} \label{alg:slb:init_skills}
\FOR[Do $K$ iterations.]{$k = 1, 2, \dots, K$} \label{alg:slb:iters}
\FOR[One update per skill.]{$i = 1, 2, \dots, m$} \label{alg:slb:updates}
	
	\STATE \textbf{Policy Evaluation:}
	\STATE Evaluate $\mu$ with $\Sigma$ to obtain $V^{\langle \mu, \Sigma \rangle}_{M}$ \label{alg:slb:evaluate}
	\STATE \textbf{Skill Update:}
	\STATE Construct Skill MDP $M_{i}'$ from $M$ \& $V^{\langle \mu, \Sigma \rangle}_{M}$ 
	\STATE Solve $M_{i}'$ obtaining policy $\pi_{\theta}$ 
	\STATE $\sigma_i' \leftarrow \langle \pi_{\theta}, \beta_i \rangle$ \label{alg:slb:update_skill}
	\STATE Replace $\sigma_i$ in $\Sigma$ by $\sigma_i'$ \label{alg:slb:update_sigma}
\ENDFOR \label{alg:slb:updates_done}
\ENDFOR \label{alg:slb:iters_done}
\STATE \textbf{return} $\langle \mu, \Sigma \rangle$
\end{algorithmic}
\end{minipage}%
\hspace{0.2cm}
\begin{minipage}[t]{.4\textwidth}
\centering
\includegraphics[width=0.6\textwidth]{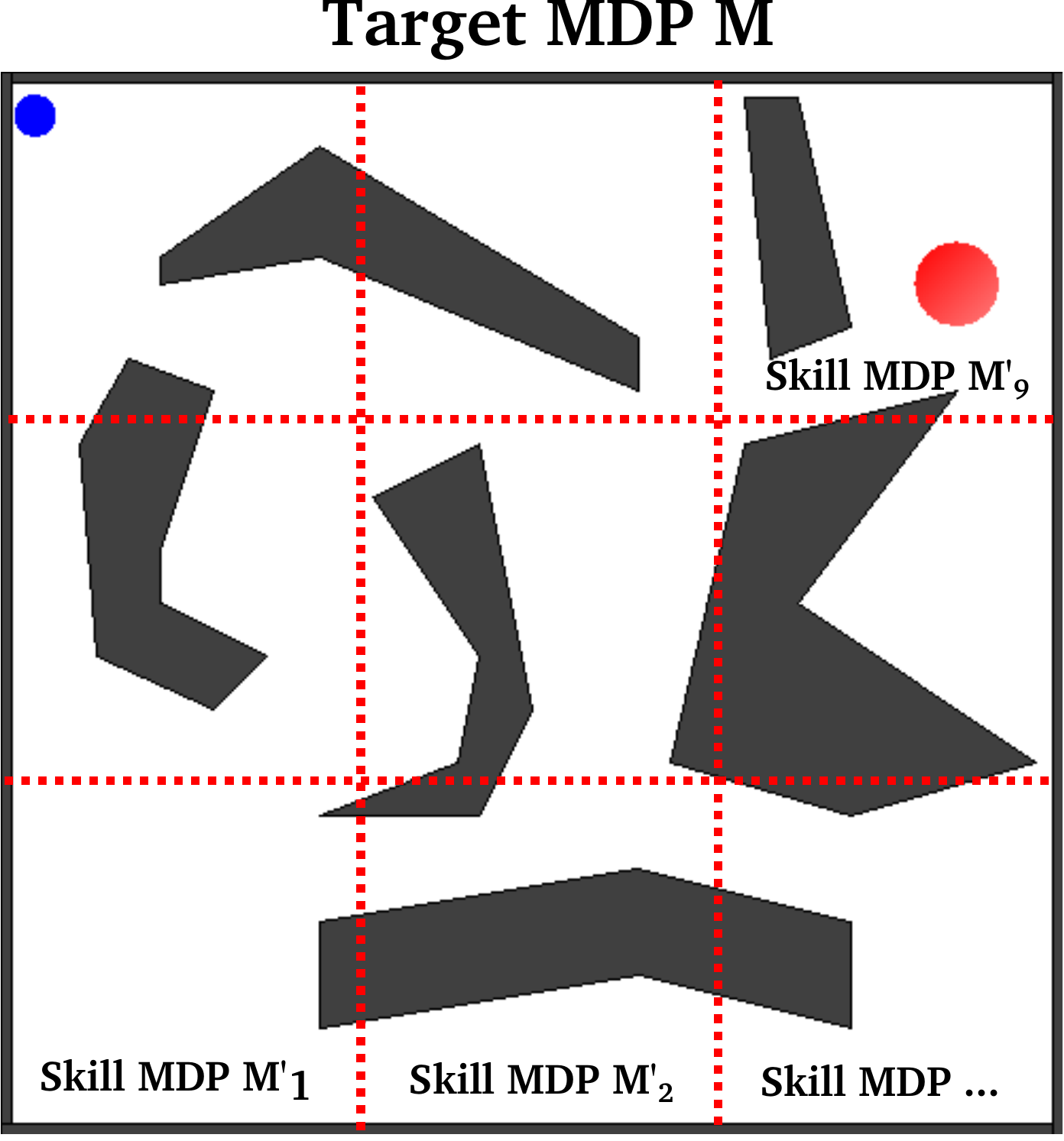}
\captionof{figure}{A partitioning of a target MDP in the pinball domain. Each sub-partition (partition class) $i$ represents the skill MDP $M'_i$. Note that, so long as the classes overlap one another and the goal region is within one of the classes, near-optimal convergence is guaranteed. Therefore, the entire state-space does not have to be partitioned.}
\label{fig:partitioning}
\end{minipage}

\Algorithm\ (\Alg, Algorithm \ref{alg:slb}) takes a target MDP $M$, a partition $\mathcal{P}$ over the state-space and a number of iterations $K \geq 1$ and returns a pair $\langle \mu, \Sigma \rangle$ containing a skill policy $\mu$ and a set of skills $\Sigma$. The number of skills $m = | \mathcal{P} |$ is equal to the number of classes in the partition $\mathcal{P}$ 
(line 
1). The skill policy $\mu$ returned by \Alg\ is defined 
(line 
2)
by
\begin{equation}
\mu(s) = \arg \max_{i \in [m]} \mathbb{I} \left\{ s \in \mathcal{P}_i \right\} \enspace ,
\end{equation}
\vspace{-0.3cm}

where $\mathbb{I} \{ \cdot \}$ is the indicator function returning $1$ if its argument is true and $0$ otherwise and $\mathcal{P}_i$ denotes the $i^{\rm th}$ class in the partition $\mathcal{P}$. Thus $\mu$ simply returns the index of the skill associated with the partition class containing the current state. On line 
3, 
\Alg\ could either initialize $\Sigma$ with skills that we believe might be useful or initialize them arbitrarily, depending on our level of prior knowledge. 

Next 
(lines 
4 -- 14),
\Alg\ performs $K$ iterations. In each iteration, \Alg\ updates the skills in $\Sigma$ 
(lines 
5 -- 13).
Remember that the value of a skill depends on how it is combined with other skills (e.g., Figure \ref{fig:skill_sets}a failed because a single \TEA\ prevented reaching the goal). If we allowed all skills to change simultaneously, the skills could not reliably bootstrap off of each other. Therefore, \Alg\ updates each skill individually. Multiple iterations are needed so that the skill set can converge (Figure \ref{fig:bs}).


The process of updating a skill 
(lines 
6 -- 12)
starts by evaluating $\mu$ with the current skill set $\Sigma$ 
(line 
6).
Any number of policy evaluation algorithms could be used here, such as TD$(\lambda)$ with function approximation \cite{Sutton1998} or LSTD \cite{Boyan2002}, modified to be used with skills. In our experiments, we used a straighforward variant of LSTD \cite{Sorg2010}. Then we use the target MDP $M$ to construct a Skill MDP $M'$ 
(line 
9).
Next, \Alg\ uses a planning or RL algorithm to approximately solve the Skill MDP $M'$ returning a parametrized policy $\pi_{\theta}$ 
(line 
10).
Any planning or RL algorithm for regular MDPs could fill this role provided that it produces a parametrized policy. However, in our experiments, we used a simple actor-critic PG algorithm, unless otherwise stated. Then a new skill $\sigma_i' = \langle \pi_\theta, \beta_i \rangle$ is created 
(line 
11)
where $\pi_\theta$ is the policy derived on line 
10
and $\beta_i(s) = \left\{ \begin{array}{ll} 0 & \textrm{if } s \in \mathcal{P}_i \\ 1 & \textrm{otherwise} \end{array} \right.$. The definition of $\beta_i$ means that the skill will terminate only if it leaves the $i^{\rm th}$ partition. Finally, we update the skill set $\Sigma$ by replacing the $i^{\rm th}$ skill with $\sigma_i'$ 
(line 
12). It is important to note that in \Alg, updating a skill is equivalent to solving a Skill MDP. 

\section{Analysis of \Alg}

We provide the first convergence guarantee for iteratively learning skills in a continuous state MDP using \Alg\ (Lemma 1 and Lemma 2, proven in the supplementary material). We use this guarantee as well as Lemma 2 to prove Theorem \ref{thm:lsb}. This theorem enables us to analyze the quality of the policy returned by \Alg. It turns out that the quality of the policy depends critically on the quality of the skill learning algorithm. An important parameter for determining the quality of a policy returned by \Alg\ is the skill learning error defined below.

\begin{define}
\label{def:local}
Let $\mathcal{P}$ be a partition over the target MDP's state-space. The {\bf skill learning error} is
\begin{equation}
\eta_{\mathcal{P}} = \max_{i \in [m]} \eta_i \enspace ,
\end{equation}
where $\eta_i$ is the smallest $\eta_i \geq 0$, such that\\
$
V^{*}_{M_i'}(s) - V^{\pi_\theta}_{M_i'}(s) \leq \eta_i \enspace ,
$ 
for all $s \in \mathcal{P}_i$ and $\pi_\theta$ is the policy returned by the skill learning algorithm executed on $M_i'$.
\end{define}

The skill learning error quantifies the quality of the Skill MDP solutions returned by our skill learning algorithm. If we used an exact solver to learn skills, then $\eta_{\mathcal{P}} = 0$. However, if we use an approximate solver, then $\eta_{\mathcal{P}}$ will be non-zero and the quality will depend on the partition $\mathcal{P}$. Generally, using finer grain partitions will decrease $\eta_{\mathcal{P}}$. However, Theorem \ref{thm:lsb} reveals that adding too many skills can also negatively impact the returned policy's quality.

\begin{theorem} \label{thm:lsb}
Let $\varepsilon > 0$. If we run \Alg\ with partition $\mathcal{P}$ for $K \geq \log_{\gamma} \left( \varepsilon ( 1-\gamma ) \right)$ iterations, then the algorithm returns policy $\varphi = \langle \mu, \Sigma \rangle$ such that
\begin{equation} \label{eqn:lsb_err}
\Vert V^*_{M} - V^{\varphi}_{M} \Vert_\infty \leq \frac{m\eta_{\mathcal{P}}}{(1-\gamma)^2} + \varepsilon \enspace ,
\end{equation}
where $m$ is the number of classes in $\mathcal{P}$.
\end{theorem}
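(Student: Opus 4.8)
The plan is to read one iteration of \Alg\ as an approximate, block-coordinate step of (modified) policy iteration on the target MDP $M$, to establish that the exact version contracts toward $V^*_M$ at rate $\gamma$, to track the extra per-iteration error caused by solving each Skill MDP only $\eta_i$-approximately, and then to run the textbook error-propagation recursion.

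First I would observe that $\varphi=\langle\mu,\Sigma\rangle$ is, viewed from $M$, an ordinary stationary policy defined piecewise over the partition: since $\beta_i(s)=0$ exactly on $\mathcal P_i$, skill $\sigma_i$ only ever runs while the state lies in $\mathcal P_i$, so $\varphi$ acts in $s$ by running $\pi_{\theta_{i(s)}}$ with $i(s)$ the class of $s$. Hence $V^{\varphi}_M$ is a genuine value function of $M$, the evaluation step (line~7) is well defined, and writing $V_k=V^{\varphi_k}_M$ for the value after iteration $k$ we have $0\le V_k\le\frac1{1-\gamma}$ since rewards lie in $[0,1]$, so $\Vert V^*_M-V_0\Vert_\infty\le\frac1{1-\gamma}$. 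The key algebraic fact, read off Definition~\ref{def:skill_mdp}, is that when the boundary value used to build $M_i'$ is the current policy's value $V=V^\varphi_M$, then $V^*_{M_i'}(s)$ equals the value of the best policy that acts freely inside $\mathcal P_i$ and collects $V(y)$ on exit; for $s\in\mathcal P_i$ this quantity is sandwiched between the one-step optimality backup $(\bellmanop V)(s)$ and $V^*_M(s)$, and it equals $V^*_M(s)$ when $V=V^*_M$. Consequently an \emph{exact} full sweep has $V^*_M$ as a fixed point, and because each exact skill update dominates a one-step optimality backup, a monotonicity argument gives $V_k\ge\bellmanop V_{k-1}$ together with $V_k\le V^*_M$, hence $\Vert V^*_M-V_k\Vert_\infty\le\gamma\Vert V^*_M-V_{k-1}\Vert_\infty$. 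This is the content of the two lemmas cited in the analysis.

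Second, I would put back the skill learning error. By Definition~\ref{def:local} the skill solver returns on $\mathcal P_i$ a policy whose $M_i'$-value is within $\eta_i\le\eta_{\mathcal P}$ of $V^*_{M_i'}$, i.e.\ an ``$\eta_i$-approximate block improvement.'' Re-evaluating the composite policy after such an update can slip its value back by at most $O(\eta_i/(1-\gamma))$ --- a value-level error in one block propagates geometrically through the policy-evaluation step --- and the $m$ block updates inside one sweep stack additively; this yields the per-iteration estimate $V_k\ge\bellmanop V_{k-1}-\delta$ and $V_k\le V^*_M$ with $\delta\le\frac{m\eta_{\mathcal P}}{1-\gamma}$. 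Feeding this into the recursion gives $\Vert V^*_M-V_k\Vert_\infty\le\gamma\Vert V^*_M-V_{k-1}\Vert_\infty+\delta$, and unrolling $K$ times,
\begin{equation*}
\Vert V^*_M-V_K\Vert_\infty\;\le\;\gamma^K\Vert V^*_M-V_0\Vert_\infty+\frac{\delta}{1-\gamma}\;\le\;\frac{\gamma^K}{1-\gamma}+\frac{m\eta_{\mathcal P}}{(1-\gamma)^2}\enspace .
\end{equation*}
Since $K\ge\log_\gamma(\varepsilon(1-\gamma))$ forces $\gamma^K\le\varepsilon(1-\gamma)$, the first term is at most $\varepsilon$, which is exactly \eqref{eqn:lsb_err}; note this also explains the tension noted after Definition~\ref{def:local}, as finer partitions shrink $\eta_{\mathcal P}$ but enlarge $m$.

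I expect the main obstacle to be the second step: making precise that an $\eta_i$-suboptimal Skill MDP solve, interleaved with the exact re-evaluations \Alg\ performs between updates, costs only $O(\eta_i/(1-\gamma))$ in the value of the composite policy, and that the $m$ such costs in a sweep combine into $\frac{m\eta_{\mathcal P}}{1-\gamma}$ rather than something larger --- in particular, checking that the Gauss--Seidel (update-one-block-then-re-evaluate) structure does not amplify errors by a further factor, and that the block-by-block update operator is genuinely a $\gamma$-contraction with fixed point $V^*_M$ on the relevant set of policy value functions. The remaining pieces --- the triangle inequality, the geometric series, and turning the iteration count into $\gamma^K\le\varepsilon(1-\gamma)$ --- are routine bookkeeping.
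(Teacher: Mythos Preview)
Your proposal is correct and lands on exactly the per-iteration recursion $\|V^*_M - V_k\|_\infty \le \gamma\|V^*_M - V_{k-1}\|_\infty + \frac{m\eta_{\mathcal P}}{1-\gamma}$ that the paper establishes, and the final unrolling with $\gamma^K \le \varepsilon(1-\gamma)$ is identical to the paper's proof of the theorem. The decomposition differs somewhat: you argue the \emph{exact} sweep is a $\gamma$-contraction via monotonicity ($V_k \ge \bellmanop V_{k-1}$ and $V_k \le V^*_M$) and then add back the $m$ block errors, whereas the paper never separates exact from approximate. Its Lemma~1 analyzes a \emph{single} approximate skill update directly, showing (i) the global value can slip by at most $\eta/(1-\gamma)$ and (ii) on $\mathcal P_i$ the gap to $V^*_M$ contracts by $\gamma$ (plus $\eta/(1-\gamma)$) while off $\mathcal P_i$ it merely degrades by $\eta/(1-\gamma)$; Lemma~2 then inducts over the $m$ Gauss--Seidel updates, accumulating the $m\eta/(1-\gamma)$ term. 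The paper's route is slightly more robust precisely because monotonicity fails under approximate updates---the ``main obstacle'' you flag---so bounding the per-update degradation directly (your informal ``slip back by $O(\eta_i/(1-\gamma))$'') is what actually carries the argument, and that is exactly Lemma~1 equation~(\ref{eqn:update_value_diff}). Your framing buys a cleaner intuition for the noiseless limit; the paper's buys a proof that works uniformly without ever needing monotone improvement as an intermediate step.
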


The proof of Theorem \ref{thm:lsb} is divided into three parts (a complete proof is given in the supplementary material). The main challenge to proving Theorem \ref{thm:lsb} is that updating one skill can have a significant impact on the value of other skills. Our analysis starts by bounding the impact of updating one skill. Note that $\Sigma$ represents a skill set and $\Sigma_i$ represents a skill set where we have updated the $i^{th}$ skill (corresponding to the $i^{th}$ partition class $\mathcal{P}_i$) in the set. (1) First, we show that error between $ V_M^{*}$, the globally optimal value function, and $V_M^{\langle \mu, \Sigma_i \rangle}$, is a contraction when $s \in \mathcal{P}_i$ and is bound by $\Vert V_M^{*} - V_M^{\langle \mu, \Sigma \rangle} \Vert_\infty + \frac{\eta_{\mathcal{P}}}{1-\gamma}$ otherwise (Lemma 1). (2) Next we apply an inductive argument to show that updating all $m$ skills results in a $\gamma$ contraction over the entire state space (Lemma 2). (3) Finally, we apply this contraction recursively, which proves Theorem \ref{thm:lsb}.

This provides the first theoretical guarantees of convergence to a near optimal solution when iteratively learning a set of skills $\Sigma$ in a continuous state space. Theorem \ref{thm:lsb} tells us that when the skill learning error is small, \Alg\ returns a near-optimal policy. The first term on the right hand side of (\ref{eqn:lsb_err}) is the approximation error. This is the loss we pay for the parametrized class of policies that we learn skills over. Since $m$ represents the number of classes defined by the partition, we now have a formal way of analysing the effect of the partitioning structure. In addition, complex skills do not need to be designed by a domain expert; only the partitioning needs to be provided \textit{a-priori}. The second term is the convergence error. It goes to $0$ as the number of iterations $K$ increases.

At first, the guarantee provided by Theorem \ref{thm:lsb} may appear similar to (\cite{Hauskrecht1998}, Theorem 1). However, \cite{Hauskrecht1998}  derive \TEAs\ only at the beginning of the learning process and do not update them. On the other hand, \Alg\ updates its skill set dynamically via bootstrapping. Thus, \Alg\ does not require prior knowledge of the optimal value function.

Theorem \ref{thm:lsb} does not explicitly present the effect of policy evaluation error, which occurs with any approximate policy evaluation technique. However, if the policy evaluation error is bounded by $\nu > 0$, then we can simply replace $\eta_{\mathcal{P}}$ in (\ref{eqn:lsb_err}) with $(\eta_{\mathcal{P}} + \nu)$. Again, smaller policy evaluation error leads to smaller approximation error.

%
%

\section{Experiments and Results}
We performed experiments on three well-known RL benchmarks: Mountain Car (MC), Puddle World (PW) \cite{Sutton1996} and the Pinball domain \cite{Konidaris2009}. The MC domain has similar results to PW and therefore has been moved to the supplementary material. We use two variations for the Pinball domain, namely \textit{maze-world}, which we created, and \textit{pinball-world} which is one of the standard pinball benchmark domains. Our experiments show that, using a simple policy representation, the monolithic approach is unable to adequately solve the tasks in each case as the policy representation is not complex enough. However, \Alg\ can solve these tasks with the same simple policy representation by combining bootstrapped skills.  These domains are simple enough that we can still solve them using richer representations. This allows us to compare \Alg\ to a policy that is very close to optimal. Our experiments demonstrate potential to scale up to higher dimensional domains by combining skills over simple representations.

Recall that \Alg\ is a meta-algorithm. We must provide an algorithm for Policy Evaluation (PE) and skill learning. In our experiments, for the MC and PW domains, we used SMDP-LSTD \cite{Sorg2010} for PE and a modified version of Regular-Gradient Actor-Critic \cite{Bhatnagar2009} for skill learning (see supplementary material for details). In the Pinball domains, we used Nearest-Neighbor Function Approximation (NN-FA) for PE and UCB Random Policy Search (UCB-RPS) for skill learning.

In our experiments, for the MC and PW domains, each skill is simply represented as a probability distribution over actions (independent of the state). We compare their performance to a policy using the same representation that has been derived using the monolithic approach. Each experiment is run for $10$ independent trials. A $2 \times 2$ grid partitioning is used for the skill partition in these domains, unless otherwise stated. Binary-grid features are used to estimate the value function. 
In the pinball domains, each skill is represented by $5$ polynomial features corresponding to each state dimension and a bias term. A $4 \times 1 \times 1 \times 1$ grid-partitioning is used for \textit{maze-world} and a $4 \times 3 \times 1 \times 1$ partitioning is used for \textit{pinball-world}. The value function is represented by a KD-Tree containing $1000$ state-value pairs uniformly sampled in the domain. A value for a particular state is obtained by assigning the value of the nearest neighbor to that state that is contained within the KD-tree. Each experiment in the pinball domain has been run for $5$ independent trials.
These are example representations. In principal, any value function and policy representation that is representative of the domain can be utilized. 



\begin{figure}
\centering
\includegraphics[width=0.7\textwidth]{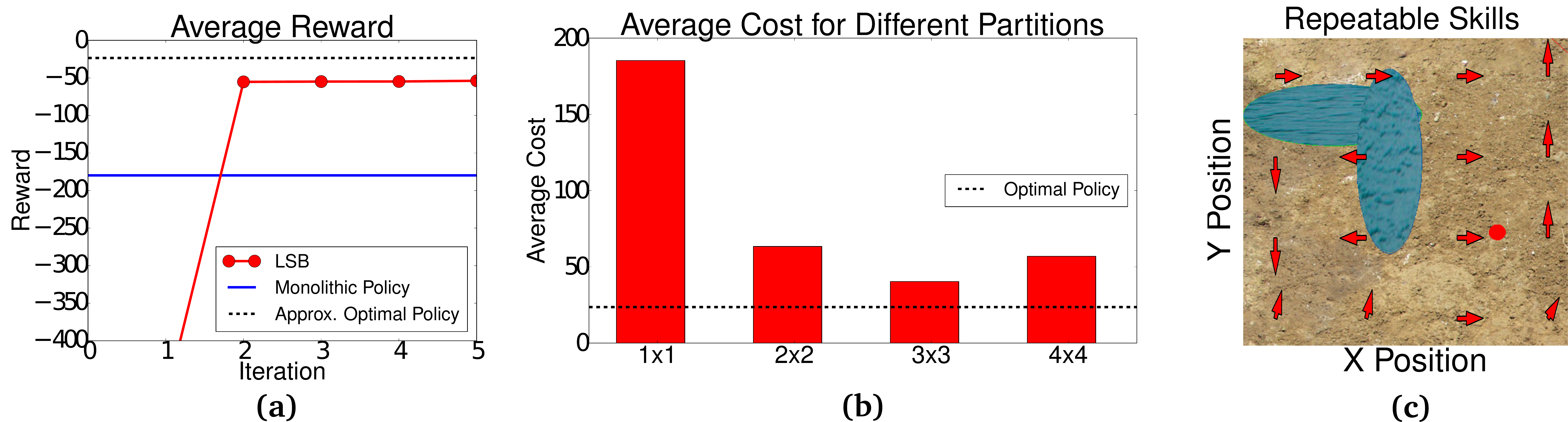} 
\caption{The Puddle World domain: ($a$) The average reward for the \Alg\ algorithm generated by the \Alg\ skill policy. This is compared to the monolithic approach that attempts to solve the global task as well as an approximately optimal policy derived using Q-learning (applied for a huge number of iterations). ($b$) The average cost (negative reward) for each grid partition. ($c$) Repeatable skills plot.}
\label{fig:pw}
\end{figure}

\begin{figure*}
\centering
\includegraphics[width=1.0\textwidth]{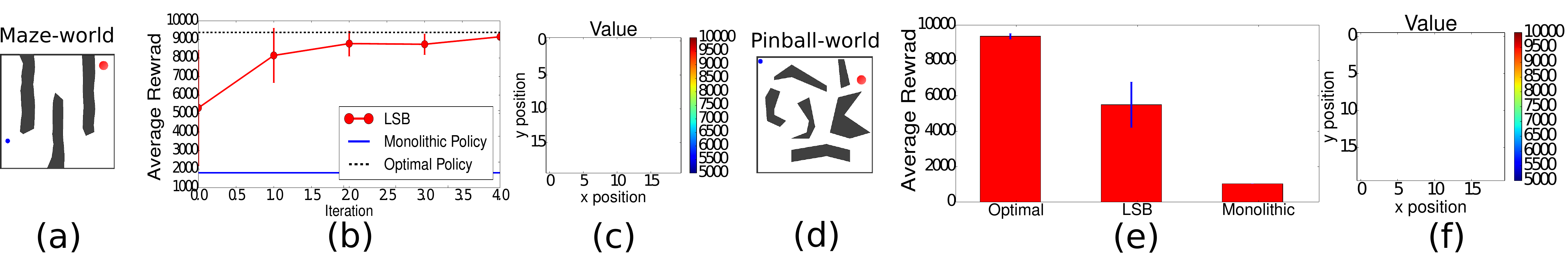}
\caption{The Pinball domains: ($a$) The maze-world domain. ($b$) The average reward for the \Alg\ algorithm generated by the \Alg\ skill policy for \textit{maze-world}. This is compared to the monolithic approach that attempts to solve the global task as well as an approximately optimal policy derived using Approximate Value Iteration (AVI) executed for a huge number of iterations. ($c$) The learned value function for the maze-world domain. ($d$) The pinball-world domain. ($e$) The average reward for the \Alg\ algorithm generated by the \Alg\ skill policy in the \textit{pinball-world} domain. In this domain, \Alg\ converges after a single iteration as we start \Alg\ in the partition containing the goal. ($f$) The learned value function.}
\label{fig:pinball}
\end{figure*}


\subsection{Puddle World}
\label{exp:pw}
Puddle World is a 2-dimensional world containing two puddles. A successful agent should navigate to the goal location, avoiding the puddles. The state space is the $\langle x,y \rangle$ location of the agent. Figure \ref{fig:pw}$a$ compares the monolithic approach with \Alg\ (for a $2 \times 2$ grid partition). The monolithic approach achieves low average reward. However, with the same restricted policy representation, \Alg\ combines a set of skills, resulting in a richer solution space and a higher average reward as seen in Figure \ref{fig:pw}$a$. This is comparable to the approximately optimal average reward attained by executing Approximate Value Iteration (AVI) for a huge number of iterations. In this experiment \Alg\ is not initiated in the partition class containing the goal state but still achieves near-optimal convergence after only $2$ iterations. 


Figure \ref{fig:pw}$b$ compares the performance of different partitions where a $1 \times 1$ grid represents the monolithic approach. The skill learning error $\eta_P$ is significantly smaller for all the partitions greater than $1\times 1$, resulting in lower cost. On the other hand, according to Theorem 1, adding more skills $m$ increases the cost. A tradeoff therefore exists between $\eta_P$ and $m$. In practice, $\eta_P$ tends to dominate $m$. In addition to the tradeoff, the importance of the partition design is evident when analyzing the cost of the $3 \times 3$ and $4 \times 4$ grids. In this scenario, the $3 \times 3$ partition design is better suited to Puddle World than the $4 \times 4$ partition, resulting in lower cost. 






\vspace{-0.4cm}
\subsection{Skill Generalization}
\vspace{-0.2cm}
\label{exp:rep_skills}
In the worst case, the number of skills learned by (\Alg) is based on the partition. However, \Alg\ may learn similar skills in different partition classes, adding redundancy to the skill set. This suggests that skills can be reused in different parts of the state-space, resulting in less skills compared to the number of partition classes.
To validate this intuition, a $4 \times 4$ grid was created for both the Mountain Car and Puddle World domains. We ran \Alg\ using this grid on each domain. Since it is more intuitive to visualize and analyze the reusable skills generated for the 2D Puddle World, we present these skills in a quiver plot superimposed on the Puddle World (Figure \ref{fig:pw}$c$). For each skill, the direction (red arrows in Figure \ref{fig:pw}$c$) is determined by sampling and averaging actions from the skill's probability distribution. As can be seen in Figure \ref{fig:pw}$c$, many of the learned skills have the same direction. These skills can therefore be combined into a single skill and reused throughout the state-space. In this case, the skill-set consisting of $16$ skills can be reduced to a reusable skill-set of $5$ skills (the four cardinal directions, including two skills that are in the approximately north direction). Therefore, skill reuse may further reduce the complexity of a solution.



\vspace{-0.4cm}
\subsection{Pinball}
\vspace{-0.2cm}
\label{sec:pinball}
These experiments have been performed in domains with simple dynamics. We decided to test \Alg\ on a domain with significantly more complicated dynamics, namely Pinball \cite{Konidaris2009}. The goal in Pinball is to direct an agent (the blue ball) to the goal location (the red region). The Pinball domain provides a sterner test for \Alg\ as the velocity at which the agent is travelling needs to be taken into account to circumnavigate obstacles. In addition, collisions with obstacles in the environment are non-linear at obstacle vertices. The state space is the four-tuple $\langle x, y, \dot{x}, \dot{y} \rangle$ where $x, y$ represents the 2D location of the agent, and $\dot{x}, \dot{y}$ represents the velocities in each direction.


Two domains have been utilized, namely \textit{maze-world} and \textit{pinball-world} (Figure \ref{fig:pinball}$a$ and Figure \ref{fig:pinball}$d$ respectively). For \textit{maze-world}, a $4 \times 1 \times 1 \times 1$ grid partitioning has been utilized and therefore $4$ skills need to be learned using \Alg. After running LSB on the maze-world domain, it can be seen in Figure \ref{fig:pinball}$b$ that \Alg\ significantly outperforms the monolithic approach. Note that each skill in \Alg\ has the same parametric representation as the monolithic approach. That is, a five-tuple $\langle 1, x, y, \dot{x}, \dot{y} \rangle$. This simple parametric representation does not have the power to consistently solve maze-world using the monolithic approach. However, using \Alg\, this simple representation is capable of solving the task in a near-optimal fashion as indicated on the average reward graph (Figure \ref{fig:pinball}$b$) and resulting value function (Figure \ref{fig:pinball}$c$).

We also tested \Alg\ on the more challenging pinball-world domain (\ref{fig:pinball}$d$). The same \Alg\ parameters were used as in maze-world, but the provided partitioning was a $4 \times 3 \times 1 \times 1$ grid. Therefore, $12$ skills needed to be learned in this domain. More skills were utilized for this domain since the domain is significantly more complicated than maze-world and a more refined skill-set is required to solve the task. As can be seen in the average reward graph in Figure \ref{fig:pinball}$e$, \Alg\ clearly outperforms the monolithic approach in this domain. It is less than optimal but still manages to sufficiently perform the task (see value function, Figure \ref{fig:pinball}$f$). The drop in performance is due to the complicated obstacle setup, the non-linear dynamics when colliding with obstacle edges and the partition design.


\vspace{-0.4cm}
\section{Discussion}
\vspace{-0.35cm}
In this paper, we introduced an iterative bootstrapping procedure for learning skills. This approach is similar to (and partly inspired by) skill chaining \cite{Konidaris2009}. However, the heuristic approach applied by skill chaining may not produce a near-optimal policy even when the skill learning error is small. We provide theoretical results for \Alg\ that directly relate the quality of the final policy to the skill learning error. \Alg\ is the first algorithm that provides theoretical convergence guarantees whilst iteratively learning a set of skills in a continuous state space. In addition, the theoretical guarantees for \Alg\ enable us to interlace skill learning with Policy Evaluation (PE). We can therefore perform PE whilst learning skills and still converge to a near-optimal solution.

In each of the experiments, \Alg\ converges in very few iterations. This is because we perform policy evaluation in between each skill update, causing the global value function to converge at a fast pace. Initializing \Alg\ in the partition class containing the goal state also results in value being propagated quickly to subsequent partition classes and therefore fast convergence. However, \Alg\ can be initialized from any partition class.


One limitation of \Alg\ is that it learns skills for all partition classes. This is a problem in high-dimensional state-spaces. However, the problem can be overcome, by focusing only on the most important regions of the state-space. One way to identify these regions is by observing an expert's demonstrations \cite{Abbeel2005,Argall2009}. In addition, we could apply self-organizing approaches to facilitate skill reuse \cite{Moerman2009}. Skill reuse can be especially useful for \textit{transfer learning}. Consider a multi-agent environment \cite{Garant2015} where many of the agents may be performing similar tasks which require a similar skill-set. In this environment, skill reuse can facilitate learning complex multi-agent policies (co-learning) with very few samples.

Given a task, \Alg\ can learn and combine skills, based on a set of rules, to solve the task. This structure of learned skills and combination rules forms a \textit{generative action grammar} \cite{Summers2012} which paves the way for building advanced skill structures that are capable of solving complex tasks in different environments and conditions. 

One exciting extension of our work would be to incorporate skill interruption, similar to option interruption. Option interruption involves terminating an option based on an adaptive interruption rule \cite{Sutton1999}. Options are terminated when the value of continuing the current option is lower than the value of switching to a new option. This also implies that partition classes can overlap one another, as the option interruption rule ensures that the option with the best long term value is always being executed. \cite{Mann2014b} interlaced Sutton's interruption rule between iterations of value iteration and proved convergence to a global optimum. In addition, they take advantage of faster convergence rates due to temporal extension by adding a time-based regularization term resulting in a new option interruption rule. However, their results have not yet been extended to use with function approximation. \cite{Comanici2010} have developed a policy gradient technique for learning the termination conditions of options. Their method involves augmentation of the state-space. However, the overall solution converges to a local optimum.


\newpage
\appendix

\section{Appendix}
\subsection{\Alg\ Skill MDP}

The formal definition of a Skill MDP is provided here for completeness.
\begin{define} \label{def:skill_mdp}
Given a target MDP $M = \langle S, A, P, R, \gamma \rangle$ and value function $V_{M}$, a {\bf Skill MDP} for partition $\mathcal{P}_i$ is an MDP defined by $M_i' = \langle S', A, P', R', \gamma \rangle$ where $S' = \mathcal{P}_i \cup \{ s_T \}$ where $s_T$ is a terminal state and $A$ is the action set from $M$. The transition probabilities
$$
P'(s'|s,a) = \left\{ \begin{array}{cl} P(s'|s,a) & \textrm{if } s \in \mathcal{P}_i \wedge s' \in \mathcal{P}_i \\ \sum_{y \in S \backslash \mathcal{P}_i} P(y|s,a) & \textrm{if } s \in \mathcal{P}_i \wedge s' = s_T \\ 1 & \textrm{if } s = s_T \wedge s' = s_T \\ 0 & \textrm{if } s = s_T \wedge s' \neq s_T  \end{array} \right. ,
$$
the reward function
$$
R'(s, a) = \left\{ \begin{array}{cl} 
0 & \textrm{if } s = s_T \\ 

\sum\limits_{s' \in \mathcal{P}_i} \hspace{-0.5em} P(s'|s,a) R(s,a) & \textrm{if } s \neq s_T \wedge s' \neq s_T \\ 

\sum\limits_{y \in S \backslash \mathcal{P}_i} \hspace{-1.2em} \psi(s,a,y)  & \textrm{if } s \neq s_T \wedge s' = s_T \end{array} \right. \enspace ,
$$
where $\psi(s, a, y) = P(y|s,a)\left( R(s,a) + \gamma V_{M}(y) \right)$, and $\gamma$ is the discount factor from $M$.
\end{define}

\subsection{Proof of Theorem 1}

In this section, we prove Theorem 1. 

We will make use of the following notations. For $m \geq 1$, we will denote by $[m]$ the set $\{ 1, 2, \dots , m\}$. Let $\sigma = \langle \pi_\theta, \beta \rangle$ be a skill. Suppose the skill $\sigma$ is initialized from a state $s$. 
\begin{enumerate}
\item $P^{\pi_\theta}_{\beta}(s'|s,t)$ denotes the probability that the skill will terminate (i.e., return control to the agent) in state $s'$ exactly $t \geq 1$ timesteps after being initialized. 
\item $\widetilde{R}^{\pi_\theta}_{\beta,s}$ denotes the expected, discounted sum of rewards received during $\sigma$'s execution. We use the $\widetilde{\cdot}$ notation to emphasize that this quantity is discounted.
\end{enumerate}

The proof of Theorem 1 will make use of two lemmas. The first lemma (Lemma \ref{lem:update}) demonstrates a relationship between the value of a skill policy before and after replacing a single skill. Within the skill's partition class there is a $\gamma$-contraction (plus some error), but outside the skill's partition class the value may become worse by a bounded amount. The second lemma (Lemma \ref{lem:iteration}) uses Lemma \ref{lem:update} to prove that after a complete iteration (each skill has been update once), there is a $\gamma$-contraction (plus some error) over the entire state-space. We then prove Theorem 1 by applying the result of Lemma \ref{lem:iteration} recursively.

%
%
\begin{lemma} \label{lem:update}
Let 
\begin{enumerate}
\item $M$ be the target MDP, 
\item $\mathcal{P}$ a partition of the state-space, 
\item $\mu$ be the skill policy defined by $\mathcal{P}$ (i.e., $\mu(s) = \arg \max_{i \in [m]} \mathbb{I} \{ s \in \mathcal{P}_i \}$), 
\item $\Sigma$ be an ordered set of $m \geq 1$ skills, and 
\item $i \in [m]$ be the index of the $i^{\rm th}$ skill in $\Sigma$.
\end{enumerate}
Suppose we apply $\mathcal{A}$ to the Skill MDP $M_i'$ defined by $M$ and $V^{\langle \mu,\Sigma \rangle}_{M}$, obtain $\pi_\theta$, construct a new skill $\sigma_{i}' = \langle \pi_\theta, \beta_i \rangle$, and create a new skill set $\Sigma' = \left( \Sigma \backslash \{ \sigma_i \} \right) \cup \{ \sigma_{i}' \}$ by replacing the $i^{\rm th}$ skill with the new skill, then 
\begin{equation} \label{eqn:update_value_diff}
\displaystyle \forall_{s \in S} \enspace , \enspace V^{\langle \mu, \Sigma \rangle}_{M}(s) - V^{\langle \mu, \Sigma' \rangle}_{M}(s) \leq \frac{\eta}{1-\gamma}
\end{equation}
and
\begin{equation} \label{eqn:update_in_partition}
\displaystyle V^{*}_{M}(s) - V^{\langle \mu, \Sigma' \rangle}_{M}(s) \leq \left\{
\begin{array}{ll}
\gamma \left\| V^{*}_{M} - V^{\langle \mu, \Sigma \rangle}_{M} \right\|_{\infty} + \frac{\eta}{1-\gamma} & \textrm{if } s \in \mathcal{P}_i, \textrm{and} \\

\left\| V^{*}_{M} - V^{\langle \mu, \Sigma \rangle}_{M} \right\|_{\infty} + \frac{\eta}{1-\gamma} & \textrm{otherwise} ,
\end{array}
\right.
\end{equation}
where $\eta$ is the skill learning error.
\end{lemma}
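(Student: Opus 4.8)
The plan is to reduce both inequalities to value functions of the Skill MDP for $\mathcal{P}_i$ and then control how solving that MDP with the \emph{old} global value perturbs the \emph{new} global value. Write $U=V^{\langle\mu,\Sigma\rangle}_M$, $W=V^{\langle\mu,\Sigma'\rangle}_M$, and for a bounded $f:S\to\mathbb{R}$ let $M_i'[f]$ be the Skill MDP of Definition~\ref{def:skill_mdp} for $\mathcal{P}_i$ built from $M$ with bootstrap $f$ (so $M_i'=M_i'[U]$). I would first record three facts. \textbf{(a) Fixed-point form:} since $\mu$ picks skill $i$ exactly on $\mathcal{P}_i$ and $\langle\cdot,\beta_i\rangle$ terminates exactly on leaving $\mathcal{P}_i$, unrolling a trajectory from $s\in\mathcal{P}_i$ until it first exits and matching it against Definition~\ref{def:skill_mdp} gives $U(s)=V^{\bar\pi_i}_{M_i'[U]}(s)$ and $W(s)=V^{\pi_\theta}_{M_i'[W]}(s)$ for all $s\in\mathcal{P}_i$, where $\bar\pi_i$ is the old $i$-th skill's policy and $\pi_\theta$ the new one. \textbf{(b) Bootstrap sensitivity:} for any policy $\pi$ on $\mathcal{P}_i$ and any $f,g$, expanding the return (only the exit-step reward depends on the bootstrap, additively) gives $V^{\pi}_{M_i'[f]}(s)-V^{\pi}_{M_i'[g]}(s)=\mathbb{E}[\gamma^{\tau}(f(Y)-g(Y))]$ with $\tau\ge1$ the exit time and $Y\notin\mathcal{P}_i$ the exit state, hence this is at most $\gamma\sup_{y\notin\mathcal{P}_i}(f(y)-g(y))^{+}$. \textbf{(c) Propagation:} as $\Sigma,\Sigma'$ agree off skill $i$, the two skill policies coincide until $\mathcal{P}_i$ is first entered, so $U(s)-W(s)=\mathbb{E}[\gamma^{\kappa-1}(U(z)-W(z))]$ with $\kappa$ the first hitting time of $\mathcal{P}_i$ and $z=s_\kappa$; for $s\notin\mathcal{P}_i$ one has $\kappa\ge2$, so $\sup_{s\notin\mathcal{P}_i}(U(s)-W(s))\le\gamma a^{+}$, writing $a:=\sup_{z\in\mathcal{P}_i}(U(z)-W(z))$.

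For (\ref{eqn:update_value_diff}) I would, for $z\in\mathcal{P}_i$, insert $V^{\pi_\theta}_{M_i'[U]}(z)$ and use (a):
\[
U(z)-W(z)=\big(V^{\bar\pi_i}_{M_i'[U]}(z)-V^{\pi_\theta}_{M_i'[U]}(z)\big)+\big(V^{\pi_\theta}_{M_i'[U]}(z)-V^{\pi_\theta}_{M_i'[W]}(z)\big)\le\eta+\gamma^{2}a^{+},
\]
where the first bracket is $\le V^{*}_{M_i'[U]}(z)-V^{\pi_\theta}_{M_i'[U]}(z)\le\eta$ by the defining property of the skill learning error (Definition~\ref{def:local}), and the second is $\le\gamma\sup_{y\notin\mathcal{P}_i}(U(y)-W(y))^{+}\le\gamma^{2}a^{+}$ by (b) then (c). Taking $\sup_z$ gives the self-referential bound $a\le\eta+\gamma^{2}a^{+}$, which forces $a\le\eta/(1-\gamma)$ (trivially if $a<0$, and from $a(1-\gamma^{2})\le\eta$ otherwise), so $a^{+}\le\eta/(1-\gamma)$; feeding this back through (c) yields $U(s)-W(s)\le a^{+}\le\eta/(1-\gamma)$ for every $s$, which is (\ref{eqn:update_value_diff}).

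For (\ref{eqn:update_in_partition}) the outside-class case is immediate: $V^{*}_M(s)-W(s)=(V^{*}_M(s)-U(s))+(U(s)-W(s))\le\|V^{*}_M-U\|_\infty+\eta/(1-\gamma)$ by (\ref{eqn:update_value_diff}). For $s\in\mathcal{P}_i$ I would split
\[
V^{*}_M(s)-W(s)=\big(V^{*}_M(s)-V^{*}_{M_i'[U]}(s)\big)+\big(V^{*}_{M_i'[U]}(s)-V^{\pi_\theta}_{M_i'[U]}(s)\big)+\big(V^{\pi_\theta}_{M_i'[U]}(s)-W(s)\big),
\]
bound the middle term by $\eta$ (skill learning error), the last term (which equals $V^{\pi_\theta}_{M_i'[U]}(s)-V^{\pi_\theta}_{M_i'[W]}(s)$ by (a)) by $\gamma^{2}a^{+}\le\gamma\eta/(1-\gamma)$ via (b),(c) and the bound on $a^{+}$, and the first term by comparing $V^{*}_{M_i'[U]}$ with the value in $M_i'[U]$ of $\bar\pi^{*}$, the restriction to $\mathcal{P}_i$ of an optimal policy of $M$: the only change from $V^{*}_M$ is that the true continuation is replaced at exit by the bootstrap $U$, costing at most $\gamma\|V^{*}_M-U\|_\infty$ by the reasoning of (b). Summing and simplifying $\eta+\gamma\eta/(1-\gamma)=\eta/(1-\gamma)$ gives the in-class bound.

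The main obstacle, and the reason the two inequalities are intertwined, is the bootstrap mismatch in (a): the skill learner optimizes $M_i'[U]$, built from the \emph{old} global value, whereas the genuine post-replacement value $W$ solves a fixed point in $M_i'[W]$, whose exit bootstrap has itself moved. Absorbing this mismatch is exactly what produces the inequality $a\le\eta+\gamma^{2}a^{+}$; everything after that is one-sided bookkeeping, and it is worth noting that we only ever need to bound $U-W$ and never $W-U$, which is what avoids having to solve a two-sided fixed point. A minor technical caveat is that every ``exit time'' expectation tacitly restricts to trajectories that actually leave (resp.\ enter) $\mathcal{P}_i$; the remaining trajectories contribute $0$ and only help the inequalities.
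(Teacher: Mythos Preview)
Your argument is correct and follows the same skeleton as the paper's proof: split each inequality into the cases $s\in\mathcal{P}_i$ and $s\notin\mathcal{P}_i$, rewrite the in-class values as values in the Skill MDP $M_i'$, use the skill learning error there, and propagate via the shared skills off $\mathcal{P}_i$. The one genuine difference is exactly what you flag as the ``bootstrap mismatch.'' The paper, for $s\in\mathcal{P}_i$, bounds $V^{\langle\mu,\Sigma\rangle}_M(s)-V^{\langle\mu,\Sigma'\rangle}_M(s)$ by inserting $V^{*}_{M_i'}(s)$ and then invoking Definition~\ref{def:local} to conclude $V^{*}_{M_i'}(s)-V^{\langle\mu,\Sigma'\rangle}_M(s)\le\eta$ directly; this tacitly identifies $V^{\pi_\theta}_{M_i'[U]}$ with $V^{\langle\mu,\Sigma'\rangle}_M=V^{\pi_\theta}_{M_i'[W]}$, i.e., it ignores that the exit bootstrap has shifted from $U$ to $W$. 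Your explicit fixed-point inequality $a\le\eta+\gamma^{2}a^{+}$ is precisely what closes this gap, and as a side benefit gives the slightly sharper in-class bound $a\le\eta/(1-\gamma^{2})$ before you relax to $\eta/(1-\gamma)$. The same remark applies to the in-class case of (\ref{eqn:update_in_partition}): the paper passes from $\sigma_i'$ to the skill built from $\pi^{*}$ at a cost of $\eta$ while already bootstrapping with $W$, whereas you first stay with bootstrap $U$ (where the $\eta$ bound is legitimate) and pay the $U\to W$ switch separately via (b) and (c). So your route is not merely equivalent but a cleaner justification of the same estimates.
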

\begin{proof}
~\\
%
%
\noindent {\bf Proving that (\ref{eqn:update_value_diff}) holds:} \\
First, we show that (\ref{eqn:update_value_diff}) holds. For each skill $\sigma_i \in \Sigma$, we will denote the skill's policy and termination rule by $\pi_i$ and $\beta_i$, respectively.  If $s \in \mathcal{P}_j$ where $j \neq i$, then 
$$
\begin{array}{rcl}
V^{\langle \mu, \Sigma \rangle}_{M}(s) - V^{\langle \mu, \Sigma' \rangle}_{M}(s) & = & \left( \widetilde{R}_{\beta_j ,s}^{\pi_j} + \sum\limits_{t=1}^\infty \gamma^t \sum\limits_{s'} P_{\beta_j}^{\pi_j}(s'|s,t)V^{\langle \mu, \Sigma \rangle}(s') \right) - \left( \widetilde{R}_{\beta_j ,s}^{\pi_j} + \sum\limits_{t=1}^\infty \gamma^t \sum\limits_{s'} P_{\beta_j}^{\pi_j}(s'|s,t)V^{\langle \mu, \Sigma' \rangle}(s') \right) \\

& \leq & \gamma \left\| V^{\langle \mu, \Sigma \rangle}_{M} - V^{\langle \mu, \Sigma' \rangle}_{M} \right\|_\infty \enspace .
\end{array}
$$
On the other hand, if $s \in \mathcal{P}_i$, then
%
%
\begin{align*}
V^{\langle \mu, \Sigma \rangle}_{M}(s) - V^{\langle \mu, \Sigma' \rangle}_{M}(s) &= V^{\langle \mu, \Sigma \rangle}_{M}(s) + \left(V^{*}_{M_{i}'}(s) - V^{*}_{M_{i}'}(s) \right) - V^{\langle \mu, \Sigma' \rangle}_{M}(s) && \text{ By inserting } 0 = \left(V^{*}_{M_{i}'}(s) - V^{*}_{M_{i}'}(s) \right) \enspace . \\
&= \left( V^{\langle \mu, \Sigma \rangle}_{M}(s) - V^{*}_{M_{i}'}(s) \right) + \left( V^{*}_{M_{i}'}(s) - V^{\langle \mu, \Sigma' \rangle}_{M}(s) \right) && \text{Regrouping terms.} \\
&\leq 0 + \left( V^{*}_{M_{i}'}(s) - V^{\langle \mu, \Sigma' \rangle}_{M}(s) \right) && \text{The definition of } M_{i}' \Rightarrow V^{*}_{M_{i}'}(s) \geq V^{\langle \mu, \Sigma \rangle}_{M}(s) \enspace . \\
&\leq \eta && \text{By Definition 4.}
\end{align*}
In either case, 
$$
V^{\langle \mu, \Sigma \rangle}_{M}(s) - V^{\langle \mu, \Sigma' \rangle}_{M}(s) \leq \gamma \left\| V^{\langle \mu, \Sigma \rangle}_{M} - V^{\langle \mu, \Sigma' \rangle}_{M} \right\|_\infty + \eta \enspace ,
$$
which leads to (\ref{eqn:update_value_diff}) by recursing on this inequality.

~\\
%
%
\noindent {\bf Proving that (\ref{eqn:update_in_partition}) holds:} \\
If $s \notin \mathcal{P}_i$, then by (\ref{eqn:update_value_diff}), we have
$$
\begin{array}{rcl}
V^{*}_{M}(s) - V^{\langle\mu,\Sigma'\rangle}_{M}(s) & \leq & V^{*}_{M}(s) - \left( V^{\langle\mu,\Sigma\rangle}_{M}(s) - \frac{\eta}{1-\gamma} \right) \\

& \leq & \left\| V^{*}_{M} - V^{\langle\mu,\Sigma\rangle}_{M} \right\|_\infty + \frac{\eta}{1-\gamma} \enspace .
\end{array}
$$
Now we consider the case where $s \in \mathcal{P}_i$. Let $\sigma_{i}' = \langle \pi_\theta, \beta_i \rangle$ be the newly introduced skill. We will denote by $\sigma_{i}'\langle\mu,\Sigma'\rangle$ the policy that first executes $\sigma_{i}'$ from a state $s \in \mathcal{P}_i$ and then follows the policy $\langle \mu, \Sigma \rangle$ thereafter.

$$
\begin{array}{lrcl}
\forall_{s \in \mathcal{P}_i} \enspace , & V^{*}_{M}(s) - V^{\langle\mu,\Sigma'\rangle}_{M}(s) & = & V^{*}_{M}(s) - V^{\sigma_{i}'\langle\mu,\Sigma'\rangle}_{M} \\

& & \leq & V^{*}_{M}(s) - \left( V^{\sigma_{i}^{*}\langle\mu,\Sigma'\rangle}_{M} - \eta \right) \\

& & = & \left( \widetilde{R}^{\pi^{*}}_{\beta_i} + \sum\limits_{t=1}^{\infty} \gamma^t \sum\limits_{s'} P^{\pi^{*}}_{\beta_i}(s'|s,t) V^{*}_{M}(s') \right) \\
& & & - \left( \widetilde{R}^{\pi^{*}}_{\beta_i} + \sum\limits_{t=1}^{\infty} \gamma^t \sum\limits_{s'} P^{\pi^{*}}_{\beta_i}(s'|s,t) V^{\langle\mu,\Sigma'\rangle}_{M}(s') \right) + \eta \\

& & = & \sum\limits_{t=1}^{\infty} \gamma^t \sum\limits_{s'} P^{\pi^{*}}_{\beta_i}(s'|s,t) \left( V^{*}_{M}(s') - V^{\langle\mu,\Sigma'\rangle}_{M}(s') \right) + \eta \\

& & \leq & \sum\limits_{t=1}^{\infty} \gamma^t \sum\limits_{s'} P^{\pi^{*}}_{\beta_i}(s'|s,t) \left( V^{*}_{M}(s') - \left( V^{\langle\mu,\Sigma\rangle}_{M}(s') - \frac{\eta}{1-\gamma} \right) \right) + \eta \\

& & \leq & \gamma \left\| V^{*}_{M} - V^{\langle\mu,\Sigma\rangle}_{M} \right\|_\infty + \gamma \left( \frac{\eta}{1-\gamma} \right) + \eta \\

& & \leq & \gamma \left\| V^{*}_{M} - V^{\langle \mu, \Sigma \rangle}_{M} \right\|_\infty + \frac{\eta}{1-\gamma} \enspace .
\end{array}
$$

\end{proof}

%
%
\begin{lemma} \label{lem:iteration}
Suppose we execute \Alg\ for a single iteration. Let $\Sigma$ be the set of skills at the beginning of the iteration and $\Sigma'$ be the set of skills after each skill has been updated and the iteration has completed, then 
\begin{equation} \label{eqn:lsb:iteration}
\left\| V^{*}_{M} - V^{\langle \mu, \Sigma' \rangle}_{M} \right\|_\infty \leq \gamma \left\| V^{*}_{M} - V^{\langle \mu, \Sigma \rangle}_{M} \right\|_\infty + \frac{m\eta}{1-\gamma} \enspace .
\end{equation}
\end{lemma}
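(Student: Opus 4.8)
\textbf{Proof proposal for Lemma~\ref{lem:iteration}.}
The plan is to apply Lemma~\ref{lem:update} once per skill update and chase how the global error $\left\| V^{*}_{M} - V^{\langle \mu, \Sigma_k \rangle}_{M} \right\|_\infty$ evolves across the $m$ updates that make up one iteration. Write $\Sigma_0 = \Sigma$ for the skill set at the start of the iteration and $\Sigma_k$ for the skill set after the $k$-th skill has been replaced, so $\Sigma_m = \Sigma'$. Define $e_k = \left\| V^{*}_{M} - V^{\langle \mu, \Sigma_k \rangle}_{M} \right\|_\infty$; the goal is to show $e_m \leq \gamma e_0 + \frac{m\eta}{1-\gamma}$. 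The first step is to record what Lemma~\ref{lem:update} gives for the transition from $\Sigma_{k-1}$ to $\Sigma_k$ (updating skill $k$): for states $s \in \mathcal{P}_k$ the pointwise bound is $\gamma e_{k-1} + \frac{\eta}{1-\gamma}$, and for states outside $\mathcal{P}_k$ it is $e_{k-1} + \frac{\eta}{1-\gamma}$. Since $\mathcal{P}$ partitions the state-space, taking the sup over all $s$ naively only yields $e_k \leq e_{k-1} + \frac{\eta}{1-\gamma}$, which is not good enough — that is the crux of the difficulty.

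The key idea to get around this is an inductive invariant tracking that the \emph{already-updated} partition classes retain a contracted error. Concretely, I would prove by induction on $k$ that for every $s \in \mathcal{P}_j$ with $j \leq k$ one has $V^{*}_{M}(s) - V^{\langle \mu, \Sigma_k \rangle}_{M}(s) \leq \gamma e_0 + \frac{k\eta}{1-\gamma}$, while for $s$ in a not-yet-updated class the bound degrades only additively from $e_0$. The base case $k=0$ is vacuous. For the inductive step, updating skill $k$: on $\mathcal{P}_k$ itself Lemma~\ref{lem:update} gives $\gamma e_{k-1} + \frac{\eta}{1-\gamma} \leq \gamma e_0 + \frac{(k-1)\eta}{1-\gamma} + \frac{\eta}{1-\gamma}$ once we have bounded $e_{k-1} \leq e_0 + \frac{(k-1)\eta}{1-\gamma}$ from the induction hypothesis; on the previously-updated classes $\mathcal{P}_j$, $j < k$, inequality~(\ref{eqn:update_value_diff}) of Lemma~\ref{lem:update} says the value drops by at most $\frac{\eta}{1-\gamma}$, so the bound $\gamma e_0 + \frac{(k-1)\eta}{1-\gamma}$ becomes $\gamma e_0 + \frac{k\eta}{1-\gamma}$; and on the untouched classes the "otherwise" branch plus the induction hypothesis keep things controlled. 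After all $m$ updates every class has been visited, so for all $s$ we get $V^{*}_{M}(s) - V^{\langle \mu, \Sigma' \rangle}_{M}(s) \leq \gamma e_0 + \frac{m\eta}{1-\gamma}$; combined with $V^{\langle \mu, \Sigma' \rangle}_{M} \leq V^{*}_{M}$ this bounds the sup norm and yields~(\ref{eqn:lsb:iteration}).

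The main obstacle, as flagged, is handling the "otherwise" case of Lemma~\ref{lem:update}: each update threatens to \emph{increase} the error on classes other than the one being updated, and a careless union bound would accumulate an additive $\frac{m\eta}{1-\gamma}$ on \emph{top} of $e_0$ rather than on top of $\gamma e_0$. The resolution is that the $\frac{\eta}{1-\gamma}$ degradation promised by~(\ref{eqn:update_value_diff}) applies uniformly (it is a global bound on $V^{\langle \mu, \Sigma \rangle}_M - V^{\langle \mu, \Sigma' \rangle}_M$), so once a class has been contracted to $\gamma e_0 + (\text{something})$, subsequent updates only add $\frac{\eta}{1-\gamma}$ per update to that "something", never reintroducing the full $e_0$. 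One has to be slightly careful that $e_{k-1}$ appearing inside the $\gamma e_{k-1}$ term of~(\ref{eqn:update_in_partition}) is itself only bounded by $e_0 + \frac{(k-1)\eta}{1-\gamma}$ and not by $e_0$; but since this extra slack is multiplied by $\gamma < 1$ and then absorbed into the $\frac{m\eta}{1-\gamma}$ budget (using $\gamma \cdot \frac{(k-1)\eta}{1-\gamma} + \frac{\eta}{1-\gamma} \leq \frac{k\eta}{1-\gamma}$), the bookkeeping closes. I would double-check the worst ordering of updates does not matter, which it does not because the bound is symmetric in the classes.
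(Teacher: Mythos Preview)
Your proposal is correct and follows essentially the same argument as the paper: an induction over the $m$ skill updates maintaining the two-case invariant that already-updated classes carry the contracted bound $\gamma e_0 + \frac{k\eta}{1-\gamma}$ while not-yet-updated classes carry $e_0 + \frac{k\eta}{1-\gamma}$, using (\ref{eqn:update_in_partition}) for the newly updated class, (\ref{eqn:update_value_diff}) to preserve the contracted bound on previously updated classes, and the absorption $\gamma\cdot\frac{(k-1)\eta}{1-\gamma}+\frac{\eta}{1-\gamma}\le\frac{k\eta}{1-\gamma}$ in the bookkeeping. The paper's proof is organized identically (same three cases in the inductive step, same use of both parts of Lemma~\ref{lem:update}), so there is no substantive difference.
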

\begin{proof}
Without loss of generality, we assume that the skills are updated in order of increasing index. We denote the skill set at the beginning of the iteration by $\Sigma$ and the skill set at the end of the iteration by $\Sigma'$ after all of the skills have been updated once. It will be convenient to refer to the intermediate skill sets that are created during an iteration. Therefore, we denote by $\Sigma_1', \Sigma_2', \dots, \Sigma_m' = \Sigma'$, the set of skills after the first skill was replaced, the second skill was replaced, \dots , and after the $m^{\rm th}$ skill was replaced, respectively.

We will proceed by induction on the skill updates. As the base case, notice that by Lemma \ref{lem:update}, we have that
$$
\displaystyle V^{*}_{M}(s) - V^{\langle \mu, \Sigma_1' \rangle}_{M}(s) \leq \left\{
\begin{array}{ll}
\gamma \left\| V^{*}_{M} - V^{\langle \mu, \Sigma \rangle}_{M} \right\|_{\infty} + \frac{\eta}{1-\gamma} & \textrm{if } s \in \mathcal{P}_1, \textrm{and} \\

\left\| V^{*}_{M} - V^{\langle \mu, \Sigma \rangle}_{M} \right\|_{\infty} + \frac{\eta}{1-\gamma} & \textrm{otherwise} .
\end{array}
\right.
$$

Let $1 \leq i < m$. Now suppose for $\Sigma_i'$, we have that
$$
\displaystyle V^{*}_{M}(s) - V^{\langle \mu, \Sigma_i' \rangle}_{M}(s) \leq \left\{
\begin{array}{ll}
\gamma \left\| V^{*}_{M} - V^{\langle \mu, \Sigma \rangle}_{M} \right\|_{\infty} + \frac{i \eta}{1-\gamma} & \textrm{if } s \in \bigcup\limits_{j\in [i]} \mathcal{P}_j, \textrm{and} \\

\left\| V^{*}_{M} - V^{\langle \mu, \Sigma \rangle}_{M} \right\|_{\infty} + \frac{i \eta}{1-\gamma} & \textrm{otherwise} .
\end{array}
\right.
$$

Now for $\Sigma_{i+1}'$, we have several cases:
\begin{enumerate}
\item $s \in \mathcal{P}_{i+1}$:

By applying Lemma \ref{lem:iteration}, we see that

$$
\begin{array}{rcl}
V^{*}_{M}(s) - V^{\langle \mu, \Sigma_{i+1}' \rangle}_{M}(s) & \leq & \gamma \left\| V^{*}_{M} - V^{\langle \mu, \Sigma_{i}' \rangle}_{M} \right\|_\infty + \frac{\eta}{1-\gamma} \\

& \leq & \gamma \left( \left\| V^{*}_{M} - V^{\langle \mu, \Sigma \rangle}_{M} \right\|_\infty + \frac{i \eta}{1-\gamma} \right) + \frac{\eta}{1-\gamma} \\

& \leq & \gamma \left\| V^{*}_{M} - V^{\langle \mu, \Sigma \rangle}_{M} \right\|_\infty + \frac{(i+1) \eta}{1-\gamma} \enspace .
\end{array}
$$

\item $s \in \bigcup\limits_{j \in [i]} \mathcal{P}_{j}$:

By applying Lemma \ref{lem:iteration}, we see that

$$
\begin{array}{rcl}
V^{*}_{M}(s) - V^{\langle \mu, \Sigma_{i+1}' \rangle}_{M}(s) & \leq & V^{*}_{M}(s) - V^{\langle \mu, \Sigma_{i}' \rangle}_{M}(s) + V^{\langle \mu, \Sigma_{i}' \rangle}_{M}(s) - V^{\langle \mu, \Sigma_{i+1}' \rangle}_{M}(s) \\

& \leq & V^{*}_{M}(s) - V^{\langle \mu, \Sigma_{i}' \rangle}_{M}(s) + \frac{\eta}{1-\gamma} \\

& \leq & \left( \gamma \left\| V^{*}_{M} - V^{\langle \mu, \Sigma \rangle}_{M} \right\|_{\infty} + \frac{i \eta}{1-\gamma} \right) + \frac{\eta}{1-\gamma} \\

& = & \gamma \left\| V^{*}_{M} - V^{\langle \mu, \Sigma \rangle}_{M} \right\|_{\infty} + \frac{(i+1) \eta}{1-\gamma} \enspace .
\end{array}
$$

\item $s \notin \bigcup\limits_{j \in [i+1]} \mathcal{P}_{j}$:

Again, by Lemma \ref{lem:iteration}, we see that

$$
\begin{array}{rcl}
V^{*}_{M}(s) - V^{\langle \mu, \Sigma_{i+1}' \rangle}_{M}(s) & \leq & \left\| V^{*}_{M} - V^{\langle \mu, \Sigma_{i}' \rangle}_{M} \right\|_{\infty} + \frac{\eta}{1-\gamma} \\

& \leq & \left( \left\| V^{*}_{M} - V^{\langle \mu, \Sigma \rangle}_{M} \right\|_{\infty} + \frac{i \eta}{1-\gamma} \right) + \frac{\eta}{1-\gamma} \\

& \leq & \left\| V^{*}_{M} - V^{\langle \mu, \Sigma \rangle}_{M} \right\|_{\infty} + \frac{(i+1) \eta}{1-\gamma} \enspace .
\end{array}
$$
\end{enumerate}

Thus by the principle of mathematical induction the statement is true for $i = 1, 2, \dots , m-1$. After performing $m$ updates, $\bigcup\limits_{j \in [m]} \mathcal{P}_j \equiv S$. Thus, we obtain the $\gamma$-contraction over the entire state-space.

\end{proof}

%
%
\subsubsection{Proof of Theorem 1}

\begin{proof} {\bf (of Theorem 1)}

The loss of all policies is bounded by $\frac{1}{1-\gamma}$. Therefore, by applying Lemma \ref{lem:iteration} and recursing on (\ref{eqn:lsb:iteration}) for $K \geq \log_{\gamma} \left( \varepsilon ( 1-\gamma ) \right)$ iterations, we obtain
$$
\begin{array}{rcl}
\left\| V^{*}_{M} - V^{\varphi}_{M} \right\|_\infty & \leq & \gamma^{K} \left( \frac{1}{1-\gamma} \right) + \frac{m\eta}{(1-\gamma)^{2}} \\

& \leq & \gamma^{\log_{\gamma} ( \varepsilon ( 1-\gamma ) )} \left( \frac{1}{1-\gamma} \right) + \frac{m\eta}{(1-\gamma)^{2}} \\

& = & ( \varepsilon ( 1-\gamma ) ) \left( \frac{1}{1-\gamma} \right) + \frac{m\eta}{(1-\gamma)^{2}} \\

& = & \frac{m\eta}{(1-\gamma)^{2}} + \varepsilon \enspace .
\end{array}
$$

\end{proof}

%
%
\subsection{Experiments}
We performed experiments on three well-known RL benchmarks: Mountain Car (MC), Puddle World (PW) \cite{Sutton1996} and the Pinball domain \cite{Konidaris2009}. The MC domain is discussed here. The PW and Pinball domains are found in the main paper. The purpose of our experiments is to show that \Alg\ can solve a complicated task with a simple policy representation by combining bootstrapped skills. These domains are simple enough that we can still solve them using richer representations. This allows us to compare \Alg\ to a policy that is very close to optimal. Our experiments demonstrate potential to scale up to higher dimensional domains by combining skills over simple representations.

Recall that \Alg\ is a meta-algorithm. We must provide an algorithm for Policy Evaluation (PE) and skill learning. In our experiments, for the MC domain, we used SMDP-LSTD \cite{Sorg2010} for PE and a modified version of Regular-Gradient Actor-Critic \cite{Bhatnagar2009} for skill learning.

In our experiments, for the MC domain, each skill is simply represented as a probability distribution over actions (independent of the state). We compare the performance to a policy using the same representation that has been derived using the monolithic approach. Each experiment is run for $10$ independent trials. A $2 \times 2$ grid partitioning is used for the skill partition in this domain, unless otherwise stated. Binary-grid features are used to estimate the value function. 

These are example representations. In principal, any value function and policy representation that is representative of the domain can be utilized. 

\subsubsection{Mountain Car}
\label{exp:mc}
The Mountain Car domain consists of an under-powered car situated in a valley. The car has to leverage potential energy to propel itself up to the goal, which is the top of the rightmost hill. The state-space is the car's position and velocity $\langle p, v \rangle$.

Figure \ref{fig:mc}$a$ compares the monolithic approach with \Alg\ (for a $2 \times 2$ grid partition). The monolithic approach achieves low average reward. However, with the same restricted policy representation, \Alg\ combines a set of skills, resulting in a richer solution space and a higher average reward as seen in Figure \ref{fig:mc}$a$. This is comparable to the approximately optimal average reward. Convergence is achieved after a single iteration since \Alg\ is initiated from the partition containing the goal location causing value to be instantaneously propagated to subsequent skills.


Figure \ref{fig:mc}$b$ compares the performance of different partitions where a $1 \times 1$ grid represents the monolithic approach. As seen in the figure, the cost is lower for all partitions greater than $1 \times 1$ which is consistent with the results in the main paper. Figure \ref{fig:vfmc} indicates the resulting value functions for various grid sizes. The value function from the monolithic approach is not capable of solving the task whereas the $4 \times 4$ grid partition's value function is near-optimal.



\begin{figure*}
\centering
\begin{tabular}{ccc}
\includegraphics[width=0.5\textwidth]{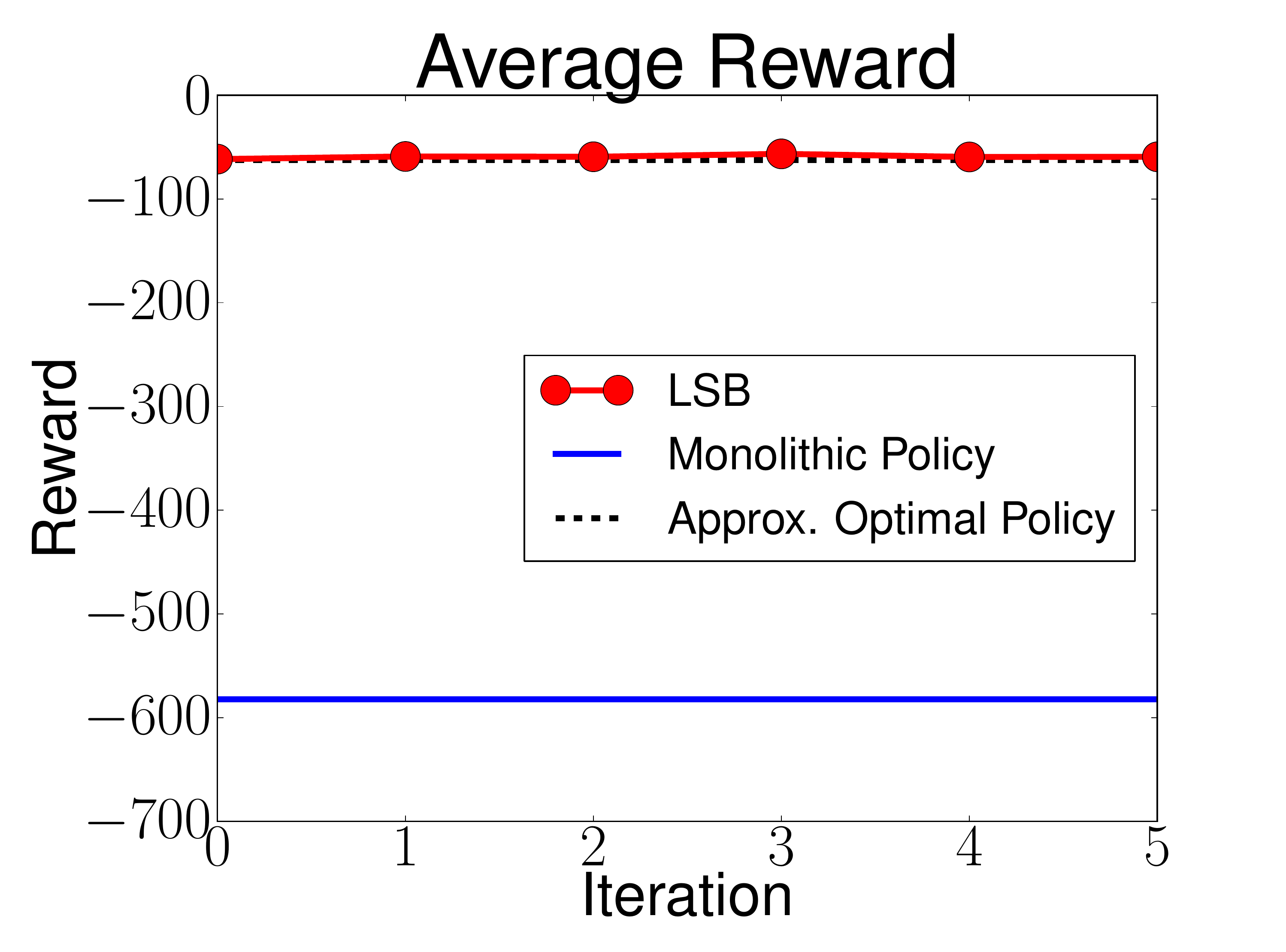} & \hspace{0em} &
\includegraphics[width=0.5\textwidth]{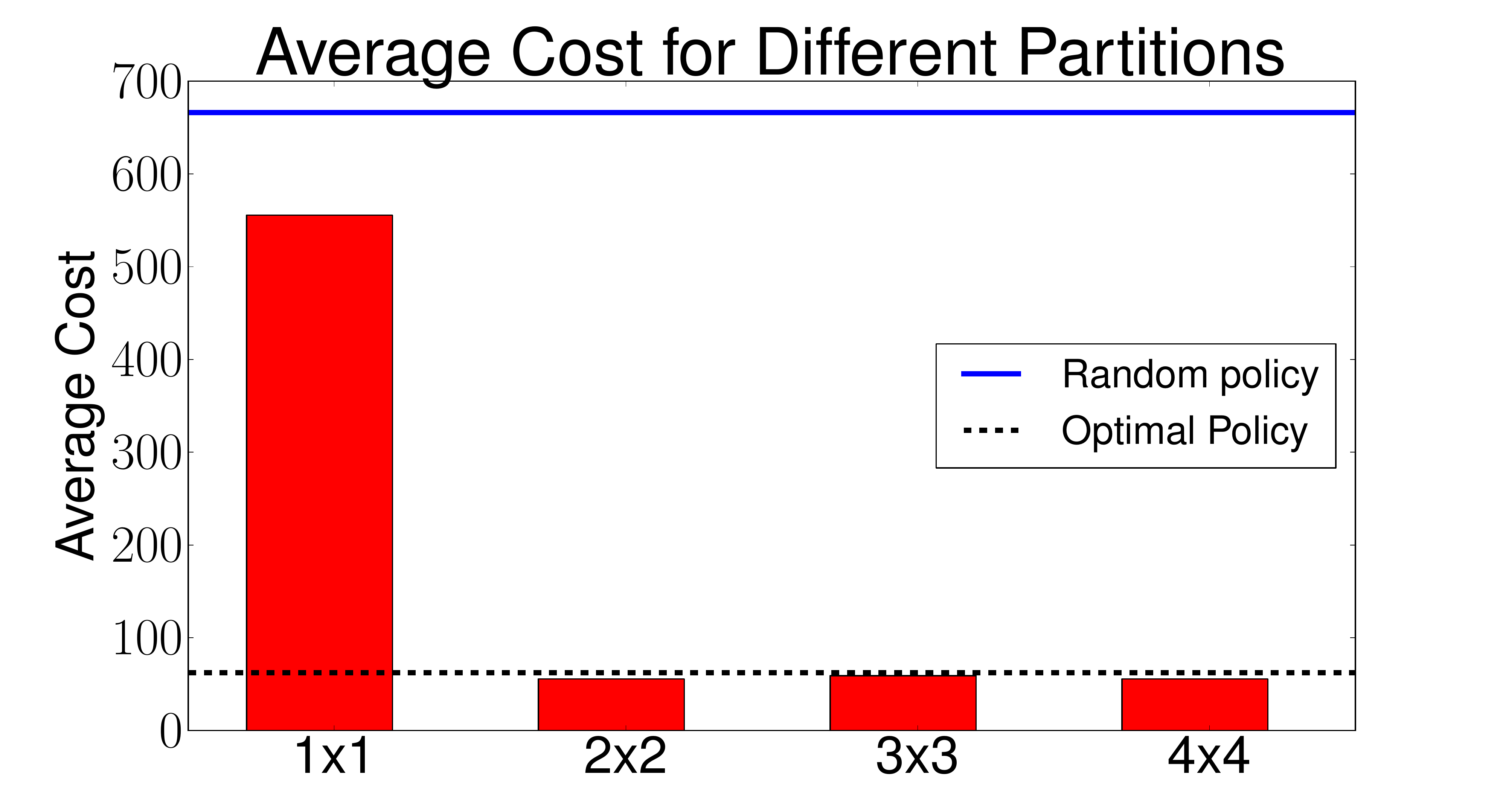} \\
($a$) & & ($b$)
\end{tabular}
\caption{The Mountain Car domain: ($a$)  The average reward for the \Alg\ algorithm generated by the \Alg\ skill policy. This is compared to the monolithic approach that attempts to solve the global task as well as an approximately optimal policy derived using Q-learning. ($b$) The average cost (negative reward) for different partitions (i.e., grid sizes).}
\label{fig:mc}
\end{figure*}

\begin{figure}
\centering
\includegraphics[width=0.7\textwidth]{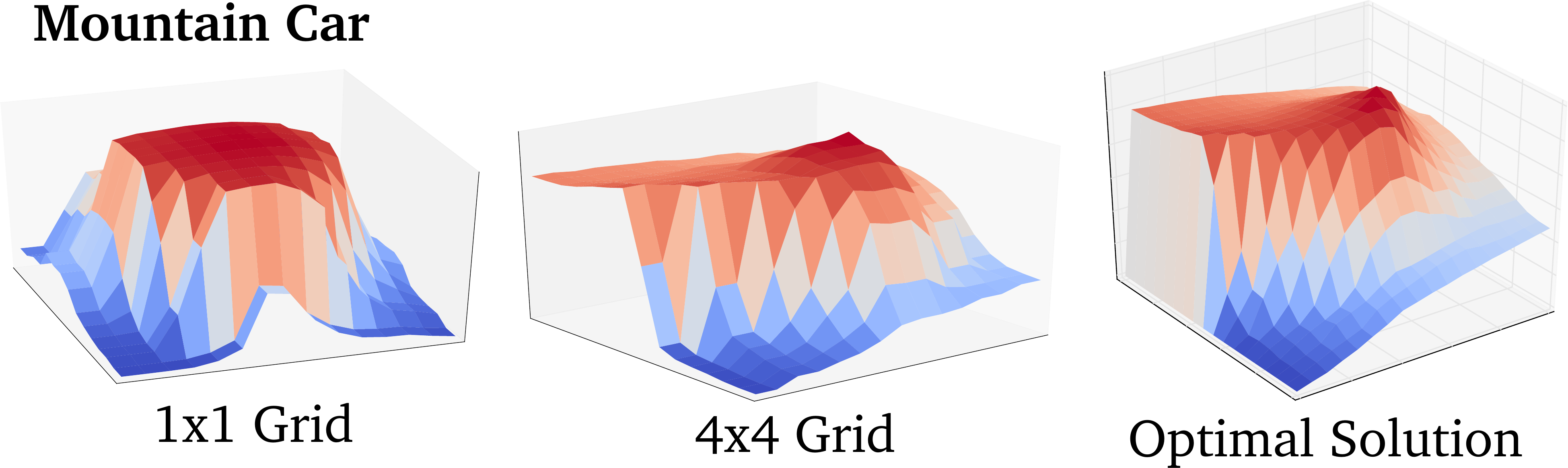}
\caption{\textit{The Mountain Car Domain}: Comparison of value functions for the monolithic approach ($1 \times 1$ grid), the best partition using \Alg\ ($4 \times 4$ grid), and an approximately optimal value function (derived using Q-learning, applied for a huge number of iterations, with a fine discretization of each task's state-space).}
\label{fig:vfmc}
\end{figure}

\subsubsection{Puddle World}
Figure \ref{fig:vf} compares the value functions for different grid sizes in Puddle World. The monolithic approach ($1 \times 1$ partition) provides a highly sub-optimal solution since, according to its value function, the agent must travel directly through the puddles to reach the goal location. The $3 \times 3$ grid provides a near-optimal solution.

\begin{figure}
\centering
\includegraphics[width=0.8\textwidth]{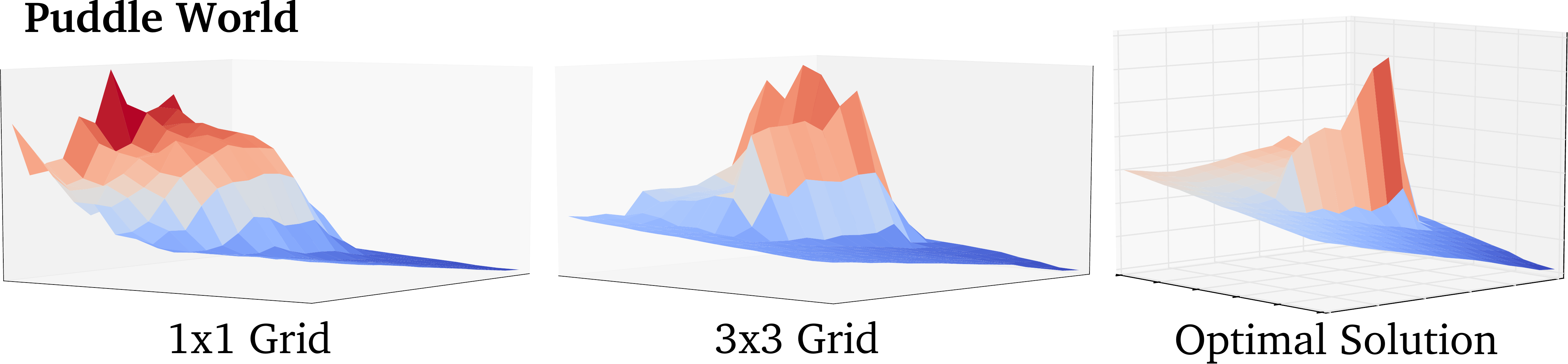}
\caption{\textit{The Puddle World Domain}: Comparison of value functions for the monolithic approach ($1 \times 1$ grid), the best partition using \Alg\ ($3 \times 3$ grid), and an approximately optimal value function (derived using Q-learning, applied for a huge number of iterations, with a fine discretization of each task's state-space).}
\label{fig:vf}
\end{figure}

%
%
\subsection{Modified Regular-Gradient Actor–Critic}

We used a very simple policy gradient algorithm (Algorithm \ref{alg:vpg}) for skill learning. The algorithm is based on Regular-Gradient Actor–Critic \cite{Bhatnagar2009}. The algorithm differs from Regular-Gradient Actor–Critic because it uses different representations for approximating the value function and the policy. For a state action pair $(s, a) \in S \times A$, a functions $\phi(s,a) \in \mathbb{R}^{d}$ and $\zeta(s,a) \in \mathbb{R}^{d'}$ mapped $(s,a)$ to a vector with dimension $d$ and $d'$, respectively. We use the representation given by $\phi$ to approximate the value function and the representation given by $\zeta$ to represent the policy. The parametrized policy was defined by
\begin{equation} \label{eqn:policy}
\pi_{\theta}(a|s) = \frac{ \exp \left( \theta^{T} \zeta(s,a) \right) }{ \sum\limits_{a' \in A} \exp \left( \theta^{T} \zeta(s,a') \right) } \enspace ,
\end{equation}
where $\theta \in \mathbb{R}^{d'}$ are the learned policy parameters. We used representations such that $d' \ll d$, meaning that the policy parametrization was much simpler than the representation used to approximate the value function. This allowed us to get an accurate representation of the value function, but restrict the policy space to very simple policies.

\begin{algorithm}
\caption{Modified Regular-Gradient Actor–Critic}
\label{alg:vpg}
\begin{algorithmic}[1]
\REQUIRE ~\\
\begin{enumerate}
\item $\phi$ : mapping from states to a vector representation used to approximate the value, 
\item $\omega$ : value function approximation parameters,
\item $\zeta$ : mapping from states to a vector representation used to approximate the policy,
\item $\theta$ : policy parameters,
\item $\alpha$ : the value learning rate (fast learning rate),
\item $\beta$ : the policy learning rate (slow learning rate, i.e., $\beta < \alpha$), and
\item $(s, a, s', r)$ : a state-action-next-state-reward tuple.
\end{enumerate}
\STATE $\widehat{V}_{\rm NEW} \leftarrow \left( r + \gamma \sum\limits_{a' \in A} \pi_{\theta}(a'|s') \omega^{T} \phi(s,a') \right)$ \COMMENT{Estimate $V^{\pi_\theta}$ given the new sample.}
\STATE $\widehat{V}_{\rm OLD} \leftarrow \omega^{T} \phi(s,a)$ \COMMENT{Use the current value function approximation to estimate the value.}
\STATE $\delta \leftarrow \left( \widehat{V}_{\rm NEW} - \widehat{V}_{\rm OLD} \right)$ \COMMENT{Compute the temporal difference error.}
\STATE $\omega' \leftarrow \omega + \alpha \delta$ \COMMENT{ Update the value function weights using the fast learning rate $\alpha$. }
\STATE $\psi_{s,a} \leftarrow \zeta(s,a) - \sum\limits_{a' \in A} \pi_\theta(a',s)\zeta(s,a')$ \COMMENT{Compute the ``compatible features'' \cite{Bhatnagar2009}. }
\STATE $\theta' \leftarrow \theta + \beta \delta \psi_{s,a}$ \COMMENT{ Update the policy parameters using the slow learning rate $\beta$. }
\STATE \textbf{Return} $\langle \omega', \theta' \rangle$ \COMMENT{ Updated value function and policy parameters. }
\end{algorithmic}
\end{algorithm}

Although using different representations for approximating the value function and the policy strictly violates the policy gradient theorem \cite{Sutton2000}, it still tends to work well in practice.

In our experiments, we used a fast learning rate of $\alpha = 0.1$ and a slow learning rate of $\beta = 0.2 \alpha$. Value function and policy parameters were initialized to zero vectors.

%
%
\subsection{Pinball Demonstration Videos}

There are two videos attached showing a demonstration of a policy learned by LSB for the Pinball domain \cite{Konidaris2009}. Both of these domains are analyzed and discussed in the main paper. The first video shows a policy learned for \textit{Maze-world} and the second video shows a policy learned for \textit{Pinball-world}, one of the standard pinball benchmark domains. The objective of the agent (blue ball) is to circumnavigate the obstacles and reach the goal region (red ball). A colored square is superimposed onto the active skill indicating the current skill or partition class being executed.
%

%
%
%
\bibliography{tmann}

\begin{thebibliography}{26}
\providecommand{\natexlab}[1]{#1}
\providecommand{\url}[1]{\texttt{#1}}
\expandafter\ifx\csname urlstyle\endcsname\relax
  \providecommand{\doi}[1]{doi: #1}\else
  \providecommand{\doi}{doi: \begingroup \urlstyle{rm}\Url}\fi

\bibitem[Abbeel and Ng(2005)]{Abbeel2005}
Pieter Abbeel and Andrew~Y Ng.
\newblock Exploration and apprenticeship learning in reinforcement learning.
\newblock In \emph{Proceedings of the 22nd International Conference on Machine
  Learning}, 2005.

\bibitem[Argall et~al.(2009)Argall, Chernova, Veloso, and Browning]{Argall2009}
Brenna~D Argall, Sonia Chernova, Manuela Veloso, and Brett Browning.
\newblock A survey of robot learning from demonstration.
\newblock \emph{Robotics and Autonomous Systems}, 57\penalty0 (5):\penalty0
  469--483, 2009.
\newblock ISSN 0921-8890.

\bibitem[Barto et~al.(2013)Barto, Konidaris, and Vigorito]{Barto2013}
Andrew Barto, George Konidaris, and C.M. Vigorito.
\newblock Behavioral hierarchy: Exploration and representation.
\newblock In \emph{Computational and Robotic Models of the Hierarchical
  Organization of Behavior}, pages 13--46. Springer, 2013.

\bibitem[Bertsekas(1995)]{Bertsekas1995}
Dimitri~P Bertsekas.
\newblock \emph{Dynamic programming and optimal control}, volume~1.
\newblock Athena Scientific Belmont, MA, 1995.

\bibitem[Bhatnagar et~al.(2009)Bhatnagar, Sutton, Ghavamzadeh, and
  Lee]{Bhatnagar2009}
Shalabh Bhatnagar, Richard~S Sutton, Mohammad Ghavamzadeh, and Mark Lee.
\newblock Natural actor--critic algorithms.
\newblock \emph{Automatica}, 45\penalty0 (11):\penalty0 2471--2482, 2009.

\bibitem[Boyan(2002)]{Boyan2002}
Justin~A Boyan.
\newblock Technical update: Least-squares temporal difference learning.
\newblock \emph{Machine Learning}, 49\penalty0 (2-3):\penalty0 233--246, 2002.

\bibitem[Brunskill and Li(2014)]{Brunskill2014}
Emma Brunskill and Lihong Li.
\newblock {PAC}-inspired option discovery in lifelong reinforcement learning.
\newblock \emph{JMLR}, 1:\penalty0 316--324, 2014.

\bibitem[Comanici and Precup(2010)]{Comanici2010}
Gheorghe Comanici and Doina Precup.
\newblock Optimal policy switching algorithms for reinforcement learning.
\newblock In \emph{Proceedings of the $\mathit{9}^{th}$ AAMAS}, pages 709--714,
  2010.

\bibitem[Garant et~al.(2015)Garant, da~Silva, Lesser, and Zhang]{Garant2015}
Daniel Garant, Bruno~C. da~Silva, Victor Lesser, and Chongjie Zhang.
\newblock {Accelerating Multi-agent Reinforcement Learning with Dynamic
  Co-learning}.
\newblock Technical report, 2015.

\bibitem[Hauskrecht et~al.(1998)Hauskrecht, Meuleau, Kaelbling, Dean, and
  Boutilier]{Hauskrecht1998}
Milos Hauskrecht, Nicolas Meuleau, Leslie~Pack Kaelbling, Thomas Dean, and
  Craig Boutilier.
\newblock Hierarchical solution of markov decision processes using
  macro-actions.
\newblock In \emph{Proceedings of the 14th Conference on Uncertainty in AI},
  pages 220--229, 1998.

\bibitem[He et~al.(2011)He, Brunskill, and Roy]{He2011}
Ruijie He, Emma Brunskill, and Nicholas Roy.
\newblock Efficient planning under uncertainty with macro-actions.
\newblock \emph{Journal of Artificial Intelligence Research}, 40:\penalty0
  523--570, 2011.

\bibitem[Konidaris and Barto(2009)]{Konidaris2009}
George Konidaris and Andrew~G Barto.
\newblock Skill discovery in continuous reinforcement learning domains using
  skill chaining.
\newblock In \emph{NIPS 22}, pages 1015--1023, 2009.

\bibitem[Mankowitz et~al.(2014)Mankowitz, Mann, and Mannor]{Mann2014b}
Daniel~J Mankowitz, Timothy~A Mann, and Shie Mannor.
\newblock Time regularized interrupting options.
\newblock \emph{ICML}, 2014.

\bibitem[Mann and Mannor(2014)]{Mann2014a}
Timothy~A Mann and Shie Mannor.
\newblock Scaling up approximate value iteration with options: Better policies
  with fewer iterations.
\newblock In \emph{Proceedings of the $\mathit{31}^{st}$ ICML}, 2014.

\bibitem[McGovern and Barto(2001)]{McGovern2001}
Amy McGovern and Andrew~G Barto.
\newblock {Automatic Discovery of Subgoals in Reinforcement Learning using
  Diverse Density}.
\newblock In \emph{Proceedings of the 18th ICML}, pages 361 -- 368, 2001.

\bibitem[Moerman(2009)]{Moerman2009}
Wilco Moerman.
\newblock \emph{Hierarchical reinforcement learning: Assignment of behaviours
  to subpolicies by self-organization}.
\newblock PhD thesis, Cognitive Artificial Intelligence, Utrecht University,
  2009.

\bibitem[Peters and Schaal(2008)]{Peters2008}
Jan Peters and Stefan Schaal.
\newblock {Reinforcement learning of motor skills with policy gradients}.
\newblock \emph{Neural Networks}, 21:\penalty0 682--691, 2008.

\bibitem[Precup et~al.(1998)Precup, Sutton, and Singh]{Precup1998}
Doina Precup, Richard~S Sutton, and Satinder Singh.
\newblock Theoretical results on reinforcement learning with temporally
  abstract options.
\newblock In \emph{ECML-98}, pages 382--393. Springer, 1998.

\bibitem[Puterman(1994)]{Puterman1994}
Martin~L Puterman.
\newblock \emph{{Markov Decision Processes - Discrete Stochastic Dynamic
  Programming}}.
\newblock John Wiley \& Sons, Inc., 1994.

\bibitem[Sorg and Singh(2010)]{Sorg2010}
Jonathan Sorg and Satinder Singh.
\newblock Linear options.
\newblock In \emph{Proceedings $9^{\rm th}$ AAMAS}, pages 31--38, 2010.

\bibitem[Summers-Stay et~al.(2012)Summers-Stay, Teo, Yang, Fermuller, and
  Aloimonos]{Summers2012}
Douglas Summers-Stay, Ching~Lik Teo, Yezhou Yang, C~Fermuller, and Yiannis
  Aloimonos.
\newblock Using a minimal action grammar for activity understanding in the real
  world.
\newblock In \emph{Intelligent Robots and Systems (IROS), 2012 IEEE/RSJ
  International Conference on}, pages 4104--4111. IEEE, 2012.

\bibitem[Sutton(1996)]{Sutton1996}
Richard Sutton.
\newblock Generalization in reinforcement learning: Successful examples using
  sparse coarse coding.
\newblock In \emph{Advances in neural information processing systems}, pages
  1038--1044, 1996.

\bibitem[Sutton and Barto(1998)]{Sutton1998}
Richard Sutton and Andrew Barto.
\newblock \emph{{Reinforcement Learning: An Introduction}}.
\newblock MIT Press, 1998.

\bibitem[Sutton et~al.(1999)Sutton, Precup, and Singh]{Sutton1999}
Richard~S Sutton, Doina Precup, and Satinder Singh.
\newblock {Between MDPs and semi-MDPs: A framework for temporal abstraction in
  reinforcement learning}.
\newblock \emph{AI}, 112\penalty0 (1):\penalty0 181--211, August 1999.

\bibitem[Sutton et~al.(2000)Sutton, McAllester, Singh, and Mansour]{Sutton2000}
Richard~S Sutton, David McAllester, Satindar Singh, and Yishay Mansour.
\newblock Policy gradient methods for reinforcement learning with function
  approximation.
\newblock In \emph{Advances in Neural Information Processing Systems 12}, pages
  1057--1063, 2000.

\bibitem[Vapnik(1998)]{Vapnik1998}
Vladimir~Naumovich Vapnik.
\newblock \emph{Statistical learning theory}, volume~2.
\newblock Wiley New York, 1998.

\end{thebibliography}
\bibliographystyle{plainnat}

\end{document}